\documentclass{article}
\usepackage{iclr2027_conference,times}

\usepackage{amsmath,amsfonts,bm}

\def\eqref#1{equation~\ref{#1}}

\def\1{\bm{1}}

\DeclareMathAlphabet{\mathsfit}{\encodingdefault}{\sfdefault}{m}{sl}
\SetMathAlphabet{\mathsfit}{bold}{\encodingdefault}{\sfdefault}{bx}{n}

\usepackage[font=small]{caption}
\usepackage[T1]{fontenc}
\usepackage{amsmath,amssymb,amsthm,mathtools,bm}
\usepackage{booktabs}
\usepackage{array}
\usepackage{tabularx}
\usepackage{longtable}
\usepackage{graphicx}
\usepackage{wrapfig}
\usepackage{float}
\usepackage{needspace}   
\usepackage[font=footnotesize]{subcaption}
\usepackage{microtype}
\usepackage[table]{xcolor}
\usepackage{hyperref}
\usepackage{url}
\definecolor{linkblue}{RGB}{0,70,140}
\definecolor{modulejoint}{HTML}{ED8945}
\definecolor{moduleq}{HTML}{5A85C4}
\definecolor{modulek}{HTML}{9DC3E6}
\definecolor{modulev}{HTML}{93C083}
\definecolor{moduleo}{HTML}{DDB955}
\definecolor{moduleffn}{HTML}{C98A88}
\definecolor{lightblue}{RGB}{244,249,253}
\definecolor{dvltint}{RGB}{232,240,250}
\definecolor{dvlrule}{RGB}{176,184,198}
\newcommand{\dvlTableSetup}{%
  \footnotesize\renewcommand{\arraystretch}{1.07}%
  \setlength{\aboverulesep}{0.8pt}\setlength{\belowrulesep}{0.8pt}%
  \setlength{\heavyrulewidth}{1pt}\setlength{\lightrulewidth}{0.5pt}%
  \setlength{\cmidrulewidth}{0.4pt}\setlength{\extrarowheight}{0pt}%
  \arrayrulecolor{black}}
\newcommand{\dvlModel}[2]{%
  \makebox[1.22em][l]{\raisebox{-0.12em}{\includegraphics[page=\dvlIconPage{#1},width=0.95em,height=0.85em,keepaspectratio]{figures/table_model_icons.pdf}}}%
  \textcolor{dvlmodeltext}{#2}}
\definecolor{dvlmodeltext}{HTML}{234F8C}
\definecolor{dvlmoduletext}{HTML}{2B3A4A}
\newcommand{\dvlScope}[2]{\textcolor{#1}{\rule{0.55em}{0.55em}}\hspace{0.3em}\textcolor{dvlmoduletext}{\textbf{#2}}}
\newcommand{\dvlHead}[1]{\textbf{\boldmath #1}}
\newcommand{\dvlInner}[1]{\arrayrulecolor{dvlrule}\cmidrule[0.3pt]{#1}\arrayrulecolor{black}}

\newcommand{\dvlIconPage}[1]{\csname dvliconpage@#1\endcsname}
\expandafter\def\csname dvliconpage@ai2\endcsname{1}
\expandafter\def\csname dvliconpage@deepseek\endcsname{2}
\expandafter\def\csname dvliconpage@falcon\endcsname{3}
\expandafter\def\csname dvliconpage@ibm\endcsname{4}
\expandafter\def\csname dvliconpage@meta\endcsname{5}
\expandafter\def\csname dvliconpage@mistral\endcsname{6}
\expandafter\def\csname dvliconpage@qwen\endcsname{7}
\expandafter\def\csname dvliconpage@stability\endcsname{8}
\expandafter\def\csname dvliconpage@upstage\endcsname{9}
\expandafter\def\csname dvliconpage@yi\endcsname{10}

\hypersetup{colorlinks=true,linkcolor=linkblue,citecolor=linkblue,urlcolor=linkblue}
\usepackage{titlesec}  
\usepackage{titlesec}

\newcommand{\jwl}{\textup{\textsc{DV-Lens}}}
\newcommand{\jwlkm}{\textup{\textsc{DV-Complexity}}}
\newcommand{\dvl}{\textup{\textsc{DV-Lens}}}
\newcommand{\dvlkm}{\textup{\textsc{DV-Complexity}}}

\newcommand{\WU}{W_{\!U}}

\newcommand{\ours}{Downstream Vocabulary Lens}
\newtheorem{theorem}{Theorem}[section]
\newtheorem{proposition}[theorem]{Proposition}
\newtheorem{lemma}[theorem]{Lemma}
\newtheorem{corollary}[theorem]{Corollary}

\theoremstyle{remark}

\newcommand{\norm}[1]{\left\lVert #1\right\rVert}

\makeatletter
\newcommand{\aliaslabel}[2]{%
  \expandafter\global\expandafter\let\csname r@#1\expandafter\endcsname\csname r@#2\endcsname}
\makeatother

\makeatletter
\newcommand{\affiliations}[1]{\gdef\@paperaffiliations{#1}}
\def\@maketitle{\vbox{\hsize\textwidth
  {\LARGE\sc\@title\par}
  \vskip 0.12in
  {\centering\normalfont\fontfamily{ntxtlf}\fontsize{12}{14}\selectfont\bfseries\@author\par}
  \vskip 0.2em
  {\centering\normalfont\fontfamily{ntxtlf}\fontsize{10}{11}\selectfont\@paperaffiliations\par}
  \vskip 0.18in
}}
\makeatother

\title{DV-Lens: Revealing the Functional Organization of Language Model Parameters}

\author{
Chenhang Cui\textsuperscript{1}, Jian Yu\textsuperscript{2}, Shuyi Miao\textsuperscript{3}, Xiaohao Liu\textsuperscript{1},\\
Rui Huang\textsuperscript{4}, Fei Shen\textsuperscript{1*}, An Zhang\textsuperscript{5}, Tat-Seng Chua\textsuperscript{1}
}
\affiliations{
\textsuperscript{1} National University of Singapore\\
\textsuperscript{2} Nanjing University of Science and Technology\\
\textsuperscript{3} Beihang University\\
\textsuperscript{4} University of Hong Kong\\
\textsuperscript{5} University of Science and Technology of China
}

\iclrfinalcopy
\begin{document}
\maketitle
\begingroup
\renewcommand{\thefootnote}{\fnsymbol{footnote}}
\footnotetext[1]{Corresponding author.}
\endgroup
\setcounter{footnote}{0}
\fancyhead{}
\renewcommand{\headrulewidth}{0pt}

\begin{abstract}
Understanding the functional roles of learnable parameters in model computation is important for elucidating the internal mechanisms of large language models (LLMs). 
However, how to connect parameters from different modules to verifiable output effects and further characterize the relationship between their functional organization and model capability remains to be explored. 
To this end, we introduce the downstream vocabulary lens (DV-Lens), a parameter-level interpretability framework that links native parameter directions to their downstream vocabulary responses.
Specifically, we first estimate module-specific downstream Jacobians over a reference prompt set for attention query, key, value, and output (Q/K/V/O) projections and feed-forward networks (FFNs).
Second, we use these mappings to project native parameter columns into the final vocabulary space, obtaining signed readouts that characterize their average local output responses.
Third, we group parameter columns by their vocabulary readouts and introduce downstream vocabulary complexity (DV-Complexity), which quantifies within-group structural variation using normalized reconstruction residuals of the original weights.
At the parameter level, randomized controls and finite-difference tests show that DV-Lens readouts capture non-random vocabulary structure and predict local logit changes with 98.0\% coordinate-orientation agreement across 720 cases from nine models.
These readouts further guide parameter ablation, steering, and swapping across 21 models, shifting target-token probabilities in the predicted directions under controlled conditions.
At the model level, the joint-parameter score of DV-Complexity achieves a Spearman correlation of 0.904 with benchmark-based capability rankings across 48 language models.
Together, these results provide intervention-based evidence for DV-Lens interpretations and reveal an association between DV-Complexity and model capability.

\end{abstract}

\section{Introduction}
\label{sec:intro}

Large language models (LLMs) excel at language understanding, generation, and reasoning~\citep{suzgun2022bbh,wang2024mmlupro}, yet how their learned parameters support these capabilities remains unclear~\citep{belrose2023tuned,ghandeharioun2024patchscopes}. 
Understanding parameter functions requires examining how directions in different Transformer modules influence outputs through downstream computation~\citep{geva2021ffn,katz2024backward}. 
Beyond individual parameters, their functional organization may reveal structural properties associated with model capability~\citep{eilertsen2020classifying,dravid2024interpreting}. 
This motivates two connected questions: how can we characterize and verify the downstream effects of native parameter directions, and how is the resulting parameter organization related to model capability?

Existing interpretability research provides different views of these questions. Activation-based approaches analyze input-dependent representations using vocabulary readouts, contextual probing, and sparse feature decompositions~\citep{belrose2023tuned,ghandeharioun2024patchscopes,huben2024sae,dunefsky2024transcoders}, while activation patching tests their behavioral roles through interventions~\citep{zhang2023activationpatching}. 
Parameter-oriented studies interpret feed-forward weights as key-value memories and investigate the parameters involved in factual associations~\citep{geva2021ffn,meng2022rome}. For example, WeightLens~\citep{golimblevskaia2025weightlens} derives feature descriptions from learned transcoder weights, whereas ROTATE~\citep{avrahamy2026rotate} decomposes FFN weights into vocabulary-aligned directions. 
These advances demonstrate that vocabulary readouts can expose meaningful structure in model internals.
Neuron-level studies also examine shared inference computations between LLMs and vision-language models through activation analysis and causal probing~\citep{cui2026llmsvlms}. 
However, they usually interpret parameters without considering how their effects pass through later computations to the model output. This motivates us to study how semantic information is organized across parameter dimensions and whether this organization is related to model capability.

\begin{figure*}[t]
  \centering
  \includegraphics[width=0.98\textwidth]{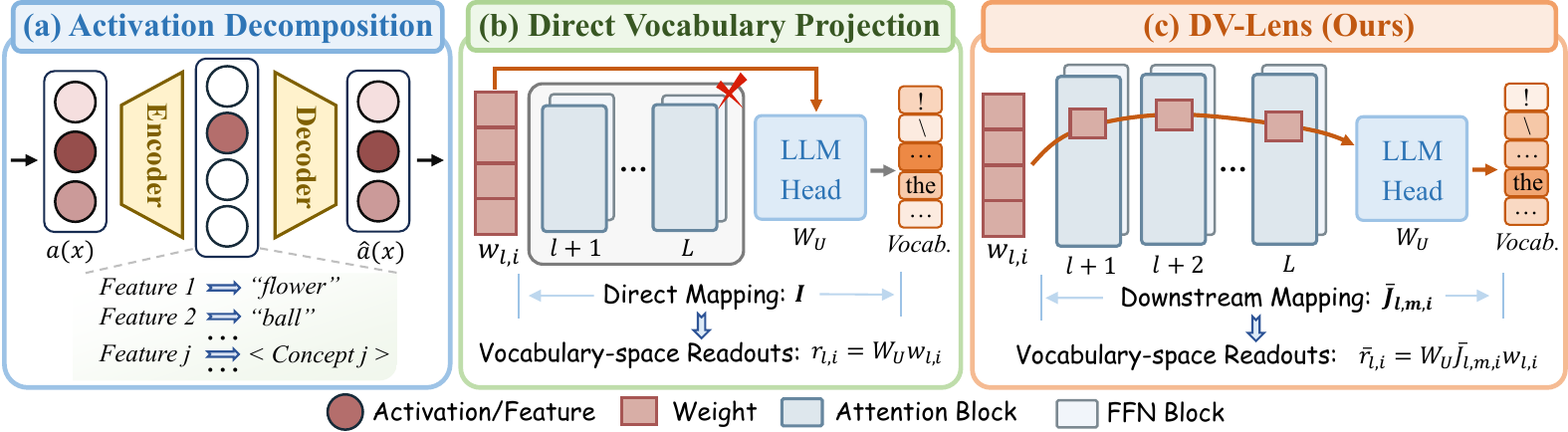}
  \vspace{-0.1cm}
\caption{\textbf{Three paradigms for interpreting model internals.}
Activation decomposition explains input-dependent activations through learned features; direct vocabulary projection maps a parameter direction $w_{\ell,m,i}$ into vocabulary space through the LLM head alone; \dvl{} first propagates the direction through the downstream mapping $\bar J_{\ell,m}$ and then reads it out in vocabulary space.}
  \label{fig:teaser}
\end{figure*}

To address these questions, we introduce the downstream vocabulary lens (\jwl{}), which uses native parameter columns as the unit of analysis. For attention Q/K/V/O projections and FFN modules, we estimate downstream Jacobian mappings over a reference prompt set and propagate parameter directions into the final vocabulary space. 
The resulting signed readouts characterize the average local logit response along each direction, placing parameters from different modules in a common output representation; Figure~\ref{fig:teaser} contrasts this with activation decomposition and direct vocabulary projection. 
Randomized controls and finite-difference tests assess these readouts, while parameter ablation, steering, and source-to-target swapping across 21 models provide behavioral evidence for predicted changes in target-token probabilities under controlled settings. 
To extend the analysis from individual parameter functions to parameter organization, we further introduce downstream vocabulary complexity (\jwlkm{}). This metric groups parameters by their normalized vocabulary readouts and measures the normalized reconstruction residual of the original weights within these groups, quantifying structural variation not captured by coarse functional roles. Across 48 language models, its joint-parameter score achieves a Spearman correlation of $0.904$ with benchmark-based capability rankings.
 The main contributions are summarized as follows:
\begin{itemize}
    \item We introduce \jwl{}, a parameter-level interpretability framework that uses module-specific downstream mappings to characterize native parameter directions across attention and FFN modules in a common vocabulary space.

    \item We provide complementary statistical and behavioral evidence for \jwl{} through matched random controls, finite-difference tests, and direct parameter interventions, linking its readouts to observed output changes under controlled settings.

    \item We introduce \jwlkm{} to quantify structural variation within parameter groups defined by downstream readouts, and show that the resulting model-level score is strongly associated with benchmark-based capability rankings across 48 language models.
\end{itemize}

\section{Related Work}

\noindent\textbf{Interpretability Methods for Language Models.}
Existing interpretability methods study internal model structure from several complementary perspectives, including decoding intermediate representations, intervening on activations, and identifying structured features or directions. Tuned Lens and Patchscopes interpret intermediate representations through learned decoders or alternative model computations~\citep{belrose2023tuned,ghandeharioun2024patchscopes}, while activation patching measures the behavioral effect of replacing internal activations~\citep{zhang2023activationpatching}. Sparse autoencoders, transcoders, ICA-based methods, and analyses of verbalizable representations characterize internal model structure through sparse features, distinctive directions, and functionally meaningful representations~\citep{huben2024sae,dunefsky2024transcoders,icalens2026,gurnee2026jlens}. In contrast, \dvl{} focuses directly on LLMs' parameter columns and characterizes their functional roles through module-specific mappings and prompt-averaged responses.
In vision-language settings, DNA localizes latent forgery-sensitive units~\citep{dou2026dna}, while LAKE identifies sparse anomaly-sensitive neurons~\citep{li2026latent}.

\noindent\textbf{Parameter Semantics, Localization, and Editing.}
Parameter-oriented studies examine the information represented in learned weights and its relationship to model behavior. Feed-forward layers have been interpreted as key-value memories, where input-dependent coefficients combine value directions that admit vocabulary projections~\citep{geva2021ffn}. Backward Lens projects gradients into vocabulary space to inspect how information is written into model parameters during learning~\citep{katz2024backward}. WeightLens derives feature descriptions from learned transcoder weights, whereas ROTATE extracts vocabulary-aligned directions from feed-forward weights without forward passes~\citep{golimblevskaia2025weightlens,avrahamy2026rotate}. Causal tracing localizes states that mediate factual predictions, and Rank-One Model Editing (ROME) uses this localization to perform rank-one edits to feed-forward weights~\citep{meng2022rome}. \dvl{} tests predicted output changes through direct interventions on the identified parameter signals.
Related concept-erasure research studies how to suppress sensitive features while preserving coupled benign features through orthogonal projection~\citep{shi2026orthoeraser}.

\section{\ours}
\label{sec:method}
This section presents \ours{} (\dvl{}), which traces learned parameter
directions to model outputs. We derive vocabulary readouts and use them to guide
ablation, steering, and source-to-target swapping. We then introduce \dvlkm{}
to summarize parameter organization, examine its relation to capability,
and develop its theoretical interpretation.

\subsection{Motivation: Tracing Parameter Effects to the Output}
\label{sec:dvl_readout}

We begin with a local calculation that motivates \dvl{}.
Given an input $x$, a Transformer module $m$ at layer $\ell$, and a
learned parameter column $w_{\ell,m,i}$, we ask how increasing the
activation associated with this parameter changes the model's final
output logits.

\noindent\textbf{Learned Parameters and Input-Dependent Activations.}
Consider a linear projection
$    h_{\ell,m}(x)
    =
    W_{\ell,m}f_{\ell,m}(x),
$
where
$W_{\ell,m}
=[w_{\ell,m,1},\ldots,w_{\ell,m,n_m}]$,
$f_{\ell,m}(x)$ is the input activation, and $n_m$ is its width.
The module output can be written as
\begin{equation}
    h_{\ell,m}(x)
    =
    \sum_{i=1}^{n_m}
    w_{\ell,m,i}f_{\ell,m,i}(x).
    \label{eq:parameter_decomposition}
\end{equation}
Thus, $w_{\ell,m,i}$ specifies the direction written by parameter
column $i$, while $f_{\ell,m,i}(x)$ determines how strongly that
direction participates in the current computation.  This columnwise
view applies uniformly to FFN and attention projection matrices.

\noindent\textbf{Propagation through Subsequent Model Computation.}
Now perturb the corresponding activation by a small amount, $f_{\ell,m,i}(x)
    \rightarrow
    f_{\ell,m,i}(x)+\epsilon.$
Equation~\ref{eq:parameter_decomposition} gives
$
    h_{\ell,m}(x;\epsilon)
    =
    h_{\ell,m}(x)+\epsilon w_{\ell,m,i}.
$ Let
 $
    z(x)
    =
    F_{\ell,m\rightarrow L}(h_{\ell,m}(x);x)
$
denote the final normalized representation before the LLM
head, where $F_{\ell,m\rightarrow L}$ contains the remaining
computation after module $(\ell,m)$ and $L$ denotes the final layer.
We define the downstream Jacobian as
 \begin{equation}
    J_{\ell,m\rightarrow L}(x)
    =
    \frac{\partial z(x)}
         {\partial h_{\ell,m}(x)}.
 \end{equation}
By the chain rule, we have
$
    \frac{\partial z(x)}
         {\partial f_{\ell,m,i}}
    =
    J_{\ell,m\rightarrow L}(x)w_{\ell,m,i}.
    \label{eq:parameter_final_response}
$

With
$\operatorname{logits}(x)=\WU z(x)+b_U$, where $\WU$ is the
unembedding matrix and $b_U$ is the output bias, the logit response is
\begin{equation}
    \frac{\partial\,\operatorname{logits}(x)}
         {\partial f_{\ell,m,i}}
    =
    \WU
    J_{\ell,m\rightarrow L}(x)
    w_{\ell,m,i}.
    \label{eq:dvl_derivative}
\end{equation}
Equivalently, for a small perturbation $\epsilon$,
\begin{equation}
    \operatorname{logits}(x;\epsilon)
    -
    \operatorname{logits}(x;0)
    =
    \epsilon
    \WU J_{\ell,m\rightarrow L}(x)w_{\ell,m,i}
    +
    O(\epsilon^2).
    \label{eq:dvl_taylor}
\end{equation}
Define the prompt-local unit-response readout
$
    r_{\ell,m,i}(x)
    =
    \WU J_{\ell,m\rightarrow L}(x)w_{\ell,m,i}.
    \label{eq:dvl_prompt_readout}
$
Equation~\ref{eq:dvl_derivative} characterizes the response per unit
change of $f_{\ell,m,i}(x)$.  The first-order contribution already
present in the prompt computation is instead
$
    c_{\ell,m,i}(x)
    =
    f_{\ell,m,i}(x)r_{\ell,m,i}(x).
    \label{eq:dvl_realized_contribution}
$
The activation coefficient sets the vocabulary readout's orientation and
scale; changing its sign reverses the coordinate ordering.
\subsection{Downstream Vocabulary Lens}
\label{sec:dvl_computation}

The local response in Equation~\ref{eq:dvl_derivative} depends on the
computation induced by a particular input $x$. \dvl{} turns this
input-dependent response into a common parameter representation by
estimating the downstream mapping over a fixed prompt set
$\mathcal D$.

\noindent\textbf{Estimating the Downstream Mapping.}
For each module $(\ell,m)$, the mean downstream Jacobian is
\begin{equation}
    \bar J_{\ell,m}
    =
    \frac{1}{|\mathcal D|}
    \sum_{x\in\mathcal D}
    J_{\ell,m\rightarrow L}(x).
    \label{eq:average_jacobian}
\end{equation}
For sequence-valued module outputs, the Jacobian is first aggregated
over the fixed set of valid prediction positions and then averaged
equally across prompts. This gives a common downstream mapping from the
output of module $(\ell,m)$ to the final model representation.

\noindent\textbf{\textbf{\dvl{}} Readout.}
The \emph{\dvl{} readout} of column $w_{\ell,m,i}$ is
\begin{equation}
    \bar r_{\ell,m,i}
    =
    \WU
    \bar J_{\ell,m}
    w_{\ell,m,i}
    \in
    \mathbb{R}^{|\mathcal V|}.
    \label{eq:dvl}
\end{equation}
Here $\mathcal V$ is the output vocabulary. The entries of
$\bar r_{\ell,m,i}$ describe the prompt-set average local
response to a unit change of the corresponding activation coefficient.
Their magnitudes identify vocabulary coordinates most affected along this
parameter direction.  The averaged readout is a parameter footprint, not
the activation-weighted contribution realized on any individual prompt.
Vocabulary-space readouts have been widely used to interpret
intermediate representations and model parameters
~\citep{belrose2023tuned,geva2021ffn,katz2024backward,
golimblevskaia2025weightlens,avrahamy2026rotate}.
Unlike a direct projection of an intermediate representation or weight,
\dvl{} first propagates the parameter through the subsequent model
computation via $\bar J_{\ell,m}$ and only then reads its effect in the
final output vocabulary.

\noindent\textbf{Readouts across a Parameter Matrix.}
For an entire weight matrix
$W_{\ell,m}=[w_{\ell,m,1},\ldots,w_{\ell,m,n_m}]$,
the \dvl{} readouts of all parameter columns can be written jointly as
\begin{equation}
    R_{\ell,m}
    =
    \WU
    \bar J_{\ell,m}
    W_{\ell,m}
    \in
    \mathbb{R}^{|\mathcal V|\times n_m}.
    \label{eq:dvl_matrix}
\end{equation}
Its $i$-th column is exactly the \dvl{} readout of
$w_{\ell,m,i}$,
$R_{\ell,m}[:,i]=\bar r_{\ell,m,i}$.
We estimate $\bar J_{\ell,m}$ once per module using automatic
differentiation and read out all columns jointly as
$\WU\bar J_{\ell,m}W_{\ell,m}$. We apply the same procedure
independently to each eligible Q/K/V/O and FFN weight matrix, placing
their learned parameters in a common representation space defined by
the final output vocabulary.

\subsection{Parameter Signals for Targeted Token Control}
\label{sec:dvl_intervention}

The vocabulary readout maps learned parameters to output tokens. For a
target concept, we identify its associated parameter signals and use them
to control the corresponding tokens.

\noindent\textbf{Representing a Concept in the Output Vocabulary.}
Let $\mathcal T_c$ denote the token set associated with concept $c$.
For intervention analysis, we incorporate the learned
final-normalization scale into the unembedding as
$\widetilde W_U=\WU\Gamma$, where $\Gamma$ is its diagonal scale matrix.
We represent concept $c$ by
$u_c=|\mathcal T_c|^{-1}
\sum_{v\in\mathcal T_c}\widetilde W_U[v,:]^\top$.
For $C$ concepts, we distinguish each from the others using the
one-versus-rest readout
\begin{equation}
    s_c
    =
    u_c
    -
    \frac{1}{C-1}
    \sum_{c'\neq c}
    u_{c'}.
    \label{eq:semantic_contrast}
\end{equation}
These readouts satisfy $\sum_{c=1}^{C}s_c=0$ and therefore span at most
$C-1$ dimensions.

\noindent\textbf{Mapping Output Concepts to Learned Parameters.}
Using a fixed prompt set $\mathcal D_J$ for Jacobian estimation, let
$\bar J_{\ell,m}
=
|\mathcal D_J|^{-1}
\sum_{x\in\mathcal D_J}
J_{\ell,m\rightarrow L}(x)$.
We map concept $c$ to module $(\ell,m)$ through
\begin{equation}
    d_{\ell,m,c}
    =
    \bar J_{\ell,m}^{\top}s_c
    \in\mathbb{R}^{d_m},
    \label{eq:concept_parameter_signal}
\end{equation}
which we call the \emph{parameter signal} for concept $c$; $d_m$ is the
module output width.
For a parameter column $w_{\ell,m,i}$, its  association with this
concept is
$
    \beta_{\ell,m,i,c}
    =
    d_{\ell,m,c}^{\top}w_{\ell,m,i}
    =
    s_c^{\top}\bar J_{\ell,m}w_{\ell,m,i}.
    \label{eq:semantic_response}
$
The last expression has the same form as the \dvl{} readout: it traces a
learned parameter through subsequent model computation and evaluates
its effect with respect to the target concept.

\noindent\textbf{Ablation and Steering.}
Let $\widehat d=d/\|d\|_2$ denote the unit parameter signal and
$\lambda$ the intervention strength (denoted $\alpha$ in the experiments).
For a source concept $c$, we reduce its associated signal in each
selected parameter as
$
    w_{\ell,m,i}^{\,\prime}
    =
    w_{\ell,m,i}
    -
    \lambda
    \widehat d_{\ell,m,c}
    \left(
        \widehat d_{\ell,m,c}^{\top}w_{\ell,m,i}
    \right).
    \label{eq:ablation}
$
For a target concept, we increase its associated signal as
$
    w_{\ell,m,i}^{\,\prime}
    =
    w_{\ell,m,i}
    +
    \lambda
    \|w_{\ell,m,i}\|_2
    \widehat d_{\ell,m,c}.
    \label{eq:steering}
$

\noindent\textbf{Source-to-Target Swapping.}
Swapping combines source removal and target addition.  We select the
source-removal set $\mathcal S_{\mathrm{src}}$ and target-addition set
$\mathcal S_{\mathrm{tgt}}$ independently and apply
\begin{equation}
w_{\ell,m,k}^{\,\prime}
=
\begin{cases}
w_{\ell,m,k}
-
\lambda_{\mathrm{src}}
\widehat d_{\ell,m,\mathrm{src}}
\left(
\widehat d_{\ell,m,\mathrm{src}}^\top
w_{\ell,m,k}
\right),
& k\in\mathcal S_{\mathrm{src}}, \\[4pt]
w_{\ell,m,k}
+
\lambda_{\mathrm{tgt}}
\|w_{\ell,m,k}\|_2
\widehat d_{\ell,m,\mathrm{tgt}},
& k\in\mathcal S_{\mathrm{tgt}}.
\end{cases}
\label{eq:swap}
\end{equation}

\subsection{\dvlkm{} and Model Capability}
\label{sec:dvl_kmeans}
Beyond individual parameters, we test whether \dvl{} readouts organize
learned weights in a way that relates to model capability. \dvlkm{} groups
parameters by their readouts and measures the corresponding structure in
the original weights.

\noindent\textbf{\textbf{DV-Lens}-Guided Parameter Grouping.}
For a weight matrix $a=(\ell,m)$, write
$W_a=[w_{a,1},\ldots,w_{a,n_a}]$ and let
$R_a=\WU\bar J_aW_a$ denote its \dvl{} readout matrix.  Its $i$-th column
$r_{a,i}=R_a[:,i]$ is the  vocabulary readout associated with
parameter $w_{a,i}$.  We normalize each readout as
$\widetilde r_{a,i}
=r_{a,i}/(\|r_{a,i}\|_2+\varepsilon)$,
where $\varepsilon>0$ stabilizes normalization. With $k$ clusters and
readout centroids $\mu_{a,c}$, the assignments are
\begin{equation}
    c_{a,i}^{\mathrm{DVL}}
    =
    \arg\min_{c\in\{1,\ldots,k\}}
    \left\|
        \widetilde r_{a,i}-\mu_{a,c}
    \right\|_2^2,
    \qquad k=32.
    \label{eq:dvl_assignment}
\end{equation}
\noindent\textbf{Parameter Organization in Learned Weights.}
The cluster assignments are determined by \dvl{} readouts, while their
organization is evaluated in the original learned parameter space.
For each cluster $c$, let
$\bar w_{a,c}
=
|\{i:c_{a,i}^{\mathrm{DVL}}=c\}|^{-1}
\sum_{i:c_{a,i}^{\mathrm{DVL}}=c}w_{a,i}$.
We define the \mbox{matrix-level reconstruction error as}
\begin{equation}
    E_a^{\mathrm{DVL}\rightarrow\mathrm{raw}}
    =
    \frac{1}{\sum_{j=1}^{n_a}\|w_{a,j}\|_2^2}
    \sum_{i=1}^{n_a}
    \left\|
        w_{a,i}
        -
        \bar w_{a,c_{a,i}^{\mathrm{DVL}}}
    \right\|_2^2.
    \label{eq:matrix_dvl_kmeans}
\end{equation}
Smaller values indicate tighter grouping of learned weights with similar
\dvl{} readouts.

\noindent\textbf{Model-Level Parameter Organization.}
For model $M$ and module scope $\mathcal S$, let
$\mathcal A_{\mathcal S}(M)$ denote its eligible weight matrices.  We
define
\begin{equation}
    \operatorname{DV\mbox{-}Complexity}_{\mathcal S}(M)
    =
    \frac{1}{|\mathcal A_{\mathcal S}(M)|}
    \sum_{a\in\mathcal A_{\mathcal S}(M)}
    E_a^{\mathrm{DVL}\rightarrow\mathrm{raw}}.
    \label{eq:model_dvl_kmeans}
\end{equation}
This aggregates the matrix-level organization into a single structural
score for each model.

\noindent\textbf{Full-Cohort Relation to Model Capability.}
Let $\mathcal C=\{M_j\}_{j=1}^{48}$ be the fixed checkpoint cohort and
let $y_j$ denote the capability score of $M_j$.  For every module scope
$\mathcal S\in\{\mathrm{Joint},Q,K,V,O,\mathrm{FFN}\}$, define the
descending capability and structural ranks
\begin{equation}
    R_y(j)=\operatorname{rank}_{\downarrow}(y_j),
    \qquad
    R_{\mathcal S}(j)=\operatorname{rank}_{\downarrow}
    \!\left(\operatorname{DV\mbox{-}Complexity}_{\mathcal S}(M_j)\right),
    \label{eq:dvl_capability_ranks}
\end{equation}
where rank 1 denotes the largest value and ties receive their standard
average rank.  We report Spearman association as the Pearson correlation
of these two complete rank vectors,
\begin{equation}
    \rho_{\mathcal S}
    =
    \operatorname{Corr}_{\mathrm{Pearson}}
    \left(
        \left\{R_{\mathcal S}(j)\right\}_{j=1}^{48},
        \left\{R_y(j)\right\}_{j=1}^{48}
    \right).
    \label{eq:dvl_capability_correlation}
\end{equation}
The capability variable, cohort membership, and rank convention are shared
across all six scopes; no scope-specific checkpoint filtering or cohort
selection is performed.  This estimand tests cross-sectional association
between the organization exposed by \dvl{} and the fixed capability measure.
 
\subsection{Theoretical Interpretation of \dvlkm{}}
\label{sec:jkm_theory}
We interpret \dvlkm{} as measuring functional parameter structure that
cannot be compressed into a small set of roles without affecting the
predictive objective. Specifically, the score quantifies the normalized
parameter energy removed by collapsing weights within each \dvl{} role,
which can be related to the corresponding increase in predictive loss.
This establishes \dvlkm{} as a measure of loss-supported functional
parameter richness. Appendix~\ref{app:score_theory} derives this
interpretation and provides the proofs.

\section{Experiments}
\label{sec:experiments}
We test \dvl{} readouts with matched random controls and finite differences, then assess their behavioral effects through parameter ablation, steering, and swapping. Model-level analyses examine capability association and \dvlkm{} stability under benchmark exposure.
\subsection{Statistical and Functional Validation}
\label{sec:validity}

We test non-random vocabulary structure and local perturbation prediction.
The original validation uses five models: Llama-3.2-3B~\citep{grattafiori2024llama3}, Mistral-7B~\citep{jiang2023mistral},
OLMo-2-7B~\citep{olmo2025olmo2}, Granite-3.3-8B~\citep{granite2024granite3}, and Qwen2.5-0.5B~\citep{qwen2025qwen25}, with
Qwen2.5-0.5B used only during discovery.
We further extend the statistical comparison to 21 models and the
finite-difference test to nine models. CI denotes confidence interval.
Complete matched-null results, including module-level and family-level
comparisons, are provided in Appendix~\ref{app:matched_null_breadth}.
Figure labels abbreviate model names by family initial, family version, and
parameter count (Q1.5-14B, Q2.5-14B, and Q3-14B are Qwen1.5, Qwen2.5, and
Qwen3 models~\citep{bai2023qwen,yang2025qwen3}; Y1.5, G3.3, O2, M, and L3.2 stand for
Yi-1.5~\citep{young2024yi}, Granite-3.3, OLMo-2, Mistral, and Llama-3.2).

\subsubsection{Non-random vocabulary structure}
\label{sec:matched_null_statistical_exceptionality}
The Gaussian controls match \dvl{} directions in dimension, covariance
treatment, and norm. The extended analysis covers 21 models from
13 families, three depths, and Q/K/V/O/FFN. Each model-depth-module cell compares 12 concept-mean \dvl{} directions
with 192 matched Gaussian directions. Table~\ref{tab:null_validation} shows the eight models
that also appear in the finite-difference study; complete results are reported
in Appendix~\ref{app:matched_null_breadth}.

\Needspace{20\baselineskip}
\begin{wraptable}{r}{0.52\textwidth}
\centering
\dvlTableSetup
\fontsize{8pt}{9.5pt}\selectfont
\setlength{\tabcolsep}{2.0pt}
\renewcommand{\arraystretch}{1.07}
\caption{\textbf{Model-level matched-null comparison.}
$\Delta K=K_{\mathrm{DVL}}-K_{\mathrm{null}}$ measures the
excess-kurtosis advantage of \dvl{} over matched Gaussian directions.
Positive values favor \dvl{}.}
\label{tab:null_validation}
\begin{tabularx}{\linewidth}{>{\hsize=1.2\hsize\raggedright\arraybackslash}X>{\hsize=.6\hsize\raggedleft\arraybackslash}X>{\hsize=1.2\hsize\raggedleft\arraybackslash}X}
\toprule
\dvlHead{Model} & \dvlHead{$\Delta K$} & \dvlHead{95\% CI} \\
\midrule
\textcolor{black!65}{Random baseline}
& \textcolor{black!65}{$0.000$}
& \textcolor{black!65}{--} \\
\addlinespace[1pt]
\dvlModel{ai2}{OLMo-2-7B}        & $+0.630$ & $[0.284,1.054]$ \\
\dvlModel{yi}{Yi-1.5-9B}        & $+1.128$ & $[0.561,1.735]$ \\
\dvlModel{qwen}{Qwen2.5-0.5B}     & $+1.488$ & $[0.787,2.316]$ \\
\dvlModel{qwen}{Qwen2.5-14B}      & $+2.534$ & $[1.128,4.283]$ \\
\dvlModel{qwen}{Qwen1.5-14B}      & $+2.858$ & $[0.594,6.522]$ \\
\dvlModel{meta}{Llama-3.2-3B}     & $+3.107$ & $[1.372,5.190]$ \\
\dvlModel{mistral}{Mistral-7B}       & $+3.263$ & $[1.224,5.771]$ \\
\dvlModel{ibm}{Granite-3.3-8B}   & $+5.391$ & $[1.714,10.192]$ \\
\midrule
\rowcolor{dvltint}
\textbf{Mean} & $\mathbf{+2.363}$ & $\mathbf{[1.911,2.869]}$ \\
\bottomrule
\end{tabularx}
\end{wraptable}

Table~\ref{tab:null_validation} shows that all eight displayed model means are
above the matched-random baseline, although the size of the difference varies
across models. Across 21 models, the mean excess-kurtosis difference is $+2.363$
(model-bootstrap 95\% CI $[1.911,2.869]$). The original
prompt-level validation shows the same direction: the excess-kurtosis
difference is $+2.128$ with 95\% CI $[0.609,3.671]$, and the Top-1\%
$\ell_2$-energy difference is $+0.0191$ with 95\% CI
$[0.0119,0.0263]$.
Together, these results show that \dvl{} directions have stronger heavy-tail
and concentration structure than matched random directions. The 21-model
concept-level analysis and the original prompt-level analysis use different
aggregation units and are therefore reported separately.

\subsubsection{Finite-difference prediction}
\label{sec:finite_difference_prediction}
We next test whether Equation~\ref{eq:dvl_taylor} predicts the actual response
to small parameter perturbations. The evaluation contains 720 fixed cases from
nine models, with 80 cases per model. For a score vector $s$, let $U_s^K$ and
$L_s^K$ denote the token IDs with the $K$ largest and smallest scores. We
measure coordinate-orientation agreement as the fraction of matching
predicted and observed coordinate signs. With FD denoting finite differences, define
$
R_{32\rightarrow K}
=
\bigl(
|U_{\mathrm{DVL}}^{32}\cap U_{\mathrm{FD}}^K|
+
|L_{\mathrm{DVL}}^{32}\cap L_{\mathrm{FD}}^K|
\bigr)/64
$
for $K\in\{32,128\}$.
For $K=32$, this is the exact two-tail Top-32 overlap. For $K=128$, it becomes
Top-32 Recall@128, which allows small rank changes while keeping the positive
and negative tails separate. The random expectations are $1/2$ for orientation
and $32/|\mathcal V|$ and $128/|\mathcal V|$ for overlap.

\begin{wrapfigure}[20]{r}{0.55\textwidth}
  \centering
  \includegraphics[width=\linewidth]{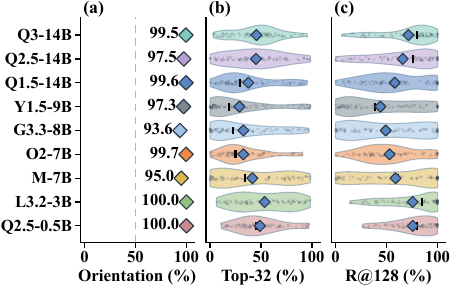}
  \caption{\textbf{Exact \dvl{} readouts recover finite-difference coordinate orientation and high-impact vocabulary regions over 720 frozen cases from nine models, with 80 cases per model.}
  \textbf{(a)} Model-mean coordinate-orientation agreement; the gold dashed line marks the 50\% random-orientation baseline.
  \textbf{(b)} Exact two-tail Top-32 overlap.
  \textbf{(c)} Two-tail Top-32 Recall@128.}
  \label{fig:fd_validation_top32}
\end{wrapfigure}

Figure~\ref{fig:fd_validation_top32} shows that coordinate orientation is close
to one for all nine models. Exact Top-32 overlap is lower and varies more
across models, while every model improves under Recall@128. This indicates
that many exact-rank misses remain within the same high-impact vocabulary
region rather than moving to unrelated coordinates. Thus, \dvl{} predicts the
direction and broader coordinate structure of the local response more reliably
than the exact ordering within each tail. The three metrics distinguish
coordinate signs, exact tail membership, and recovery within a wider tail.
Their joint pattern makes the source of disagreement visible: close orientation
agreement can coexist with changes in the ranking of high-impact tokens.
Keeping upper and lower tails separate preserves the direction of the comparison:
each predicted token is matched against the corresponding tail of the
finite-difference response.

\Needspace{14\baselineskip}
\begin{wraptable}[12]{r}{0.35\textwidth}
\centering
\dvlTableSetup
\fontsize{8pt}{9.5pt}\selectfont
\setlength{\tabcolsep}{3pt}
\renewcommand{\arraystretch}{1.00}
\caption{\textbf{Finite-difference validation.}
Orientation agreement, exact two-tail Top-32 overlap, and
two-tail Top-32 Recall@128 are reported by module.}
\label{tab:fd}
\begin{tabularx}{\linewidth}{>{\raggedright\arraybackslash}X*{3}{>{\raggedleft\arraybackslash}X}}
\toprule
\dvlHead{Module} & \dvlHead{Orient.} & \dvlHead{Top-32} & \dvlHead{R@128} \\
\midrule
\dvlScope{moduleq}{Q} & 0.976 & 0.313 & 0.525 \\
\dvlScope{modulek}{K} & 0.947 & 0.209 & 0.375 \\
\dvlScope{modulev}{V} & 0.988 & 0.412 & 0.650 \\
\dvlScope{moduleo}{O} & 0.992 & 0.537 & 0.743 \\
\dvlScope{moduleffn}{FFN} & 0.997 & 0.565 & 0.770 \\
\midrule
\rowcolor{dvltint}
\textbf{Pooled} & \textbf{0.980} & \textbf{0.407} & \textbf{0.613} \\
\bottomrule
\end{tabularx}
\end{wraptable}
Table~\ref{tab:fd} gives the corresponding module-level results. Pooled
orientation agreement is $0.980$, exact Top-32 overlap is $0.407$, and
Recall@128 is $0.613$. Orientation remains high across all modules, while
coordinate recovery is lowest for K and highest for O and FFN. The gain
from Top-32 overlap to Recall@128 further shows that many misses come from
local rank changes within the same high-impact tail.

Overall, these experiments provide two complementary checks: \dvl{} directions
show non-random vocabulary structure and predict the local response to small
parameter changes. Additional structural and token-level analyses are provided
in Appendix~\ref{app:statistical_boundaries}. We next test whether these local
predictions translate into controlled changes in model behavior in
Section~\ref{sec:intervention}.

\Needspace{18\baselineskip}
\begin{wrapfigure}[16]{r}{0.45\textwidth}
  \centering
  \includegraphics[width=0.95\linewidth]{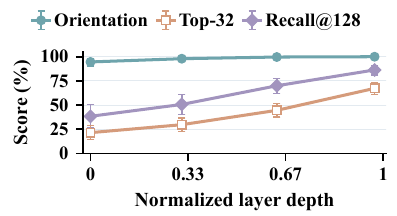}
  \caption{\textbf{Finite-difference prediction by layer depth.}
  The same 720 cases are grouped by the four sampled layer positions, with
  180 cases per depth. Curves average nine models; error bars show 95\%
  model-bootstrap CIs. Horizontal positions use mean normalized depths.}
  \label{fig:fd_depth}
\end{wrapfigure}
\noindent\textbf{Prediction across Layer Depths.}
Figure~\ref{fig:fd_depth} complements the model and module summaries by
grouping the four sampled layers according to their relative depth.
Each depth contains the same nine models, five modules, and 180 cases.
From the earliest to the latest sampled layer, orientation agreement
increases from $94.5\%$ to $99.96\%$, exact Top-32 overlap from $21.4\%$
to $67.2\%$, and Recall@128 from $38.3\%$ to $86.3\%$.
All nine models improve between these endpoints for each metric.
The early layers therefore already recover most coordinate signs,
while their high-impact token rankings are less precise.
Later layers improve exact and wider-tail recovery, locating the larger
ranking discrepancies in earlier layers.

\subsection{Behavioral Intervention}
\label{sec:intervention}
We evaluate \textbf{ablate}, \textbf{steer}, and \textbf{swap} on 21 models and ten factual concepts across Q/K/V/O/FFN and their joint pool, using eight perturbed-parameter fractions ($0.05\%$--$5\%$). The remaining abbreviations in Figure~\ref{fig:intervention_budget} are DS-R1 for DeepSeek-R1-Distill~\citep{deepseek2025r1}, F3 for Falcon3~\citep{tii2024falcon3}, and SL2 for StableLM-2~\citep{bellagente2024stablelm2}.

\begin{figure*}[!t]
  \centering
  \includegraphics[width=0.98\textwidth]{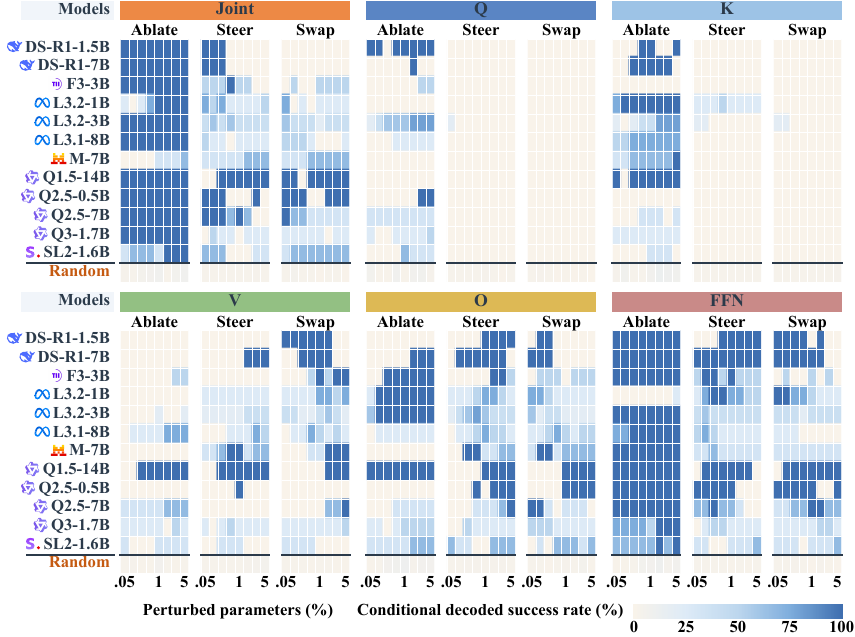}
  \caption{\textbf{Conditional decoded success rates for twelve of the 21 intervention models across six module scopes and three operations.}
  The eight columns in each bank are the perturbed fractions from $0.05\%$ to $5\%$; the bottom row is the matched random baseline. All 21 models are shown in Appendix~\ref{app:decoded_all_models}, and the matched random controls in Appendix~\ref{app:random-control}.}
  \label{fig:intervention_budget}
\end{figure*}

Figure~\ref{fig:intervention_budget} reports the primary conditional decoded success rate for twelve models (Figure~\ref{fig:intervention_all_models} shows all 21): whether the edited continuation satisfies the operation-specific answer-containment rule, among concept cells whose clean continuation contains the original gold answer. It is defined for 2,880 aggregate rows; 144 lack an eligible clean continuation. Appendix~\ref{app:probability_direction} reports future-token probability changes for all ten concepts.

Pooling concept counts at Joint scope, peak conditional decoded success rates are $61/66$ for ablation ($5\%$), $28/66$ for steering ($0.1\%$--$0.2\%$), and $23/66$ for swapping ($5\%$).

\begin{figure}[!t]
  \centering
  \renewcommand{\thesubfigure}{\alph{subfigure}}
  \captionsetup{aboveskip=2pt,belowskip=0pt}
  \captionsetup[subfigure]{font=scriptsize,position=top,justification=raggedright,singlelinecheck=false,skip=1pt}
  \begin{subfigure}[t]{0.493\textwidth}
    \subcaption{\dvlModel{meta}{Llama-3.1-8B}\hfill V: swap, 0.5\%}
    \includegraphics[width=\linewidth]{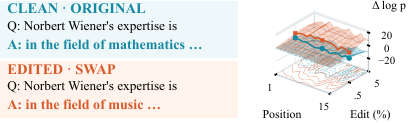}
  \end{subfigure}
  \hfill
  \begin{subfigure}[t]{0.493\textwidth}
    \subcaption{\dvlModel{qwen}{Qwen2.5-3B}\hfill V: steer, 0.2\%}
    \includegraphics[width=\linewidth]{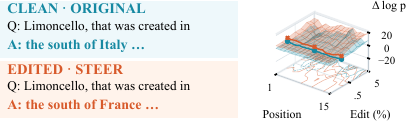}
  \end{subfigure}
  \par\vspace{1pt}
  \begin{subfigure}[t]{0.493\textwidth}
    \subcaption{\dvlModel{meta}{Llama-3.2-3B}\hfill O: swap, 0.05\%}
    \includegraphics[width=\linewidth]{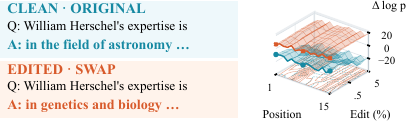}
  \end{subfigure}
  \hfill
  \begin{subfigure}[t]{0.493\textwidth}
    \subcaption{\dvlModel{mistral}{Mistral-7B}\hfill V: steer, 0.05\%}
    \includegraphics[width=\linewidth]{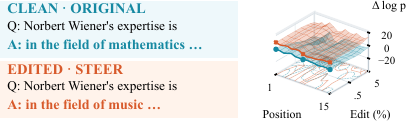}
  \end{subfigure}
  \caption{\textbf{Decoded intervention cases.} Blue/orange cards show clean/edited answers; headers give model, module, operation, and fraction. Blue/orange surfaces show source/target log-probability changes (edited minus clean); positive/negative height indicates promotion/suppression.  Edit (\%) spans eight fractions logarithmically. Bold traces mark the header's fraction; floor contours project the surfaces.}
  \label{fig:intervention_cases_main}
\end{figure}

Figure~\ref{fig:intervention_cases_main} connects decoded answer changes with source suppression and target promotion at the highlighted fraction. Detailed cases for six models appear in Appendix~\ref{app:interventions}, Tables~\ref{tab:e_case_granite}--\ref{tab:e_case_stablelm}, covering Q, K, V, O, FFN, and Joint with paired clean and edited answer excerpts.

\subsection{Model-level Capability and Benchmark-Exposure Diagnostics}
\label{sec:diagnostic}


\begin{figure}[!tbp]
  \centering
  \vspace{14pt}   
  \includegraphics[width=0.98\linewidth]{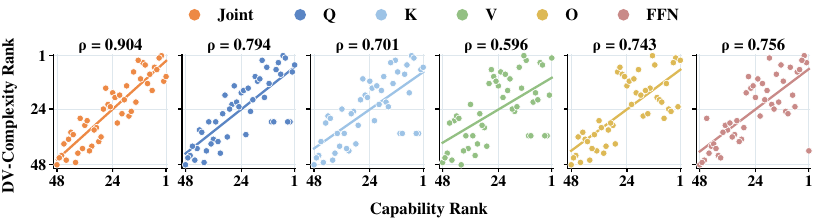}
  \captionsetup{aboveskip=1pt,belowskip=0pt}
  \caption{\textbf{Uniform six-task capability rank and DV-Complexity rank on one fixed 48-checkpoint cohort.}
  The equal-width grid is ordered Joint, Q, K, V, O, FFN; every figure uses all
  48 paired rank positions.}
  \label{fig:capability_shared48}
\end{figure}

\noindent\textbf{Cross-Sectional Capability Ranking.}
For 48 post-trained LLMs, \dvlkm{} uses 2,000 Jacobian-estimation prompts,
a rank-128 approximation, $k=32$, and all eligible layers and six module scopes. Capability is the unweighted mean over six Open LLM
Leaderboard v2 benchmarks:
Instruction-Following Eval (IFEval), BIG-Bench Hard (BBH), MATH,
Graduate-Level Google-Proof Q\&A (GPQA), Multistep Soft Reasoning (MuSR),
and MMLU-Pro, the enhanced Massive Multitask Language Understanding benchmark. Appendix~\ref{app:capability-score} specifies scoring, benchmark settings,
and model versions.
Figure~\ref{fig:capability_shared48} shows positive rank associations in all
six scopes, strongest for Joint. Spearman correlations are $\rho=0.904$,
$0.794$, $0.701$, $0.596$, $0.743$, and $0.756$ for Joint, Q, K, V, O, and
FFN, respectively.
Ties receive midranks. Fused QKV projections are split into their native
Q/K/V output-row ranges before scoring. Appendix~\ref{app:capability-controls}
reports label, representation, and leave-one-family-out controls supporting
the association across model families.

\begin{figure*}[!tbp]
  \centering
  \includegraphics[width=0.98\linewidth]{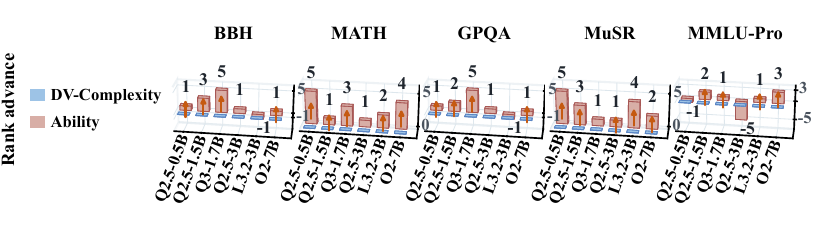}
  \caption{\textbf{Benchmark-derived ability ranks advance while the primary DV-Complexity rank remains stationary under complete-benchmark exposure.}
  Orange arrows mark positive non-ceiling joint responses.}
  \label{fig:fullbench_rank_stability}
\end{figure*}

\noindent\textbf{Stability under Complete-Benchmark Exposure.}
We expose eight models from six families to the complete BBH, MATH, GPQA,
MuSR, and MMLU-Pro test splits, then rerank each against the other models'
unchanged baseline scores. A \dvlkm{} rank change of at most one is stable.
Figure~\ref{fig:fullbench_rank_stability} shows improved ability ranks and
largely unchanged \dvlkm{} ranks under complete-test-split exposure.
Appendix~\ref{app:benchmark-exposure} details the protocol and the omission of
IFEval because its exposure training was unstable and prone to collapse.

\section{Conclusion}
In this paper, we introduce \dvl{}, which maps LLMs' parameter columns to vocabulary-space readouts, connecting
parameter interpretation with targeted interventions. Statistical and
finite-difference tests support the structure and local predictive accuracy
of these readouts, while decoded cases illustrate their behavioral effects.
\mbox{\dvlkm{}} summarizes functional organization and tracks broad capability
across 48 models. Its ranks remain largely stable under benchmark exposure,
even when benchmark performance improves. Together, these results connect
local parameter analysis with model-level comparison through a shared
vocabulary-space representation.

\subsection*{Reproducibility Statement}
The main text defines the score and evaluation procedure. Appendix~\ref{app:protocols} records the experimental protocols, Appendix~\ref{app:statistical_boundaries} and Appendix~\ref{app:interventions} document the supporting checks, and Appendix~\ref{app:score_theory} contains the assumptions and proofs for the theoretical results.

\subsection*{AI Use Statement}
Generative AI tools were used to polish the grammar and wording of the manuscript. The authors reviewed the final text and take responsibility for its content.

\bibliography{refs}
\bibliographystyle{iclr2027_conference}

\appendix
\section*{APPENDIX}
The appendix is organized as follows. Appendix~\ref{app:protocols} specifies the intervention and capability protocols, including the fixed model sets and controls. Appendix~\ref{app:statistical_boundaries} reports matched-null coverage and structural inspection of the parameter directions. Appendix~\ref{app:interventions} provides the complete intervention grids, matched controls, and decoded transition cases. Appendix~\ref{app:dvl_computation} records the computations used to obtain \dvl{} signals from model weights and prompts. Appendix~\ref{app:score_theory} develops the theory of the \dvlkm{} score, including its energy interpretation and relation to cross-entropy cost. Appendix~\ref{app:limitations} discusses limitations. Appendix~\ref{app:more_discussions} addresses questions about the interpretation and validation of the readouts and model-level score.

\section{Experimental Protocols}
\label{app:protocols}

\subsection{Intervention setup}

The intervention experiments cover 21 models, ten factual concepts, six
scopes, three operations, and eight intervention fractions:
$0.05\%$, $0.1\%$, $0.2\%$, $0.5\%$, $1\%$, $2\%$, $3\%$, and $5\%$.
The six scopes are Joint, which combines Q+K+V+O+FFN, and Q, K, V, O, and FFN
separately. For Joint, the requested fraction is selected independently within
Q, K, V, O, and FFN, and the five selected sets are then edited together.
The main decoded measure includes only clean continuations that contain the
original gold answer.

\subsection{Capability analysis}

The analysis uses a fixed set of 48 models and the joint-layer output-metric
matrix computed from 2,000 prompts at every eligible layer, with rank 128 and
$k=32$. Joint, Q, K, V, O, and FFN are all evaluated on the same 48 models.

Six Phi-3 and InternLM2 models use a fused QKV projection. For these models,
we split both the frozen operator and the model weights according to the
architecture's original Q, K, and V layout before computing separate scores.
For Phi-3, Q, K, and V occupy contiguous row ranges determined by the numbers
of query and key/value heads. For InternLM2, rows are grouped by key/value
head, with the corresponding query group followed by its K and V rows. In
both cases, the Q, K, and V parts are disjoint and together cover the full
output dimension.

The six scopes therefore give $48\times6=288$ observed values. Correlation
coefficients are recomputed directly from the displayed descending ranks.
The fitted lines summarize the relation within this fixed set of models.

\subsection{Capability score and fixed model set}
\label{app:capability-score}

We use the same capability score $y_j$ for all 48 models. The Open LLM
Leaderboard v2 evaluates IFEval, BBH, MATH Level 5, GPQA, MuSR, and MMLU-Pro
under fixed task-specific settings
~\citep{huggingfaceollbv2,zhou2023ifeval,suzgun2022bbh,hendrycks2021math,
rein2023gpqa,sprague2024musr,wang2024mmlupro}.
Let $s_{j,t}$ denote model $j$'s percentage score on task $t$. We define the
capability score as the unweighted mean of the six tasks:
\begin{equation}
y_j=\frac{1}{6}\left(
s_{j,\mathrm{IFEval}}+s_{j,\mathrm{BBH}}+s_{j,\mathrm{MATH5}}+
s_{j,\mathrm{GPQA}}+s_{j,\mathrm{MuSR}}+s_{j,\mathrm{MMLUPro}}
\right).
\label{eq:capability-score}
\end{equation}
The fixed settings are zero-shot strict instance- and prompt-level IFEval
accuracy, three-shot normalized BBH accuracy, four-shot exact match on MATH
Level 5, zero-shot normalized GPQA accuracy, zero-shot normalized MuSR
accuracy across its three subtasks, and five-shot MMLU-Pro accuracy. For
post-trained models, we use the released chat template and present few-shot
examples as multiturn conversations, following the official Open LLM Leaderboard v2 protocol.
Higher values are better for all six tasks, and all six scores are available
for every model.
For each scope $\mathcal S$, the model-level \dvlkm{} score averages all
eligible matrices as defined in Equation~\ref{eq:model_dvl_kmeans}. We rank
the 48 capability scores and the corresponding 48 \dvlkm{} scores in
descending order. Rank 1 denotes the largest value, and ties use the standard
midrank. The reported Spearman coefficient is the Pearson correlation between
these two 48-entry rank vectors, following
Equations~\ref{eq:dvl_capability_ranks}--\ref{eq:dvl_capability_correlation}.
Joint, Q, K, V, O, and FFN therefore use the same capability score, the same
48 models, and all 48 paired ranks. No models are removed for individual
modules. The set includes instruction-tuned, chat, and hybrid reasoning
models. For the six fused-QKV models, the architecture-specific splits
described above provide separate Q, K, and V measurements.
Table~\ref{tab:capability-LLMs} lists all models, while their released
versions, architectures, and capability scores are provided in the
supplementary material.

\begingroup
\dvlTableSetup
\setlength{\tabcolsep}{4pt}
\renewcommand{\arraystretch}{1.06}
\begin{longtable}{@{}>{\raggedleft\arraybackslash}p{0.035\linewidth}>{\raggedright\arraybackslash}p{0.80\linewidth}>{\raggedright\arraybackslash}p{\dimexpr0.165\linewidth-4\tabcolsep\relax}@{}}
\caption{\textbf{The fixed set of 48 LLMs used with the uniform six-task capability definition.} The table lists the LLM and its type. The released version, architecture, and capability score of every LLM are recorded with the supplementary material.}
\label{tab:capability-LLMs}\\
\toprule
\dvlHead{\#} & \dvlHead{LLM repository} & \dvlHead{Type} \\
\midrule
\endfirsthead
\multicolumn{3}{l}{\tablename\ \thetable\ (continued)}\\
\toprule
\dvlHead{\#} & \dvlHead{LLM repository} & \dvlHead{Type} \\
\midrule
\endhead
\midrule
\multicolumn{3}{r}{Continued on next page}\\
\endfoot
\bottomrule
\endlastfoot
1 & \href{https://huggingface.co/tiiuae/Falcon3-1B-Instruct}{tiiuae/Falcon3-1B-Instruct} & instruct \\
2 & \href{https://huggingface.co/tiiuae/Falcon3-7B-Instruct}{tiiuae/Falcon3-7B-Instruct} & instruct \\
3 & \href{https://huggingface.co/NousResearch/Hermes-2-Pro-Llama-3-8B}{NousResearch/Hermes-2-Pro-Llama-3-8B} & chat \\
4 & \href{https://huggingface.co/NousResearch/Hermes-3-Llama-3.1-8B}{NousResearch/Hermes-3-Llama-3.1-8B} & chat \\
5 & \href{https://huggingface.co/unsloth/Llama-3.1-8B-Instruct}{unsloth/Llama-3.1-8B-Instruct} & instruct \\
6 & \href{https://huggingface.co/meta-llama/Llama-3.2-1B-Instruct}{meta-llama/Llama-3.2-1B-Instruct} & instruct \\
7 & \href{https://huggingface.co/unsloth/Llama-3.2-3B-Instruct}{unsloth/Llama-3.2-3B-Instruct} & instruct \\
8 & \href{https://huggingface.co/mistralai/Ministral-8B-Instruct-2410}{mistralai/Ministral-8B-Instruct-2410} & instruct \\
9 & \href{https://huggingface.co/mistralai/Mistral-7B-Instruct-v0.2}{mistralai/Mistral-7B-Instruct-v0.2} & instruct \\
10 & \href{https://huggingface.co/mistralai/Mistral-7B-Instruct-v0.3}{mistralai/Mistral-7B-Instruct-v0.3} & instruct \\
11 & \href{https://huggingface.co/NousResearch/Nous-Hermes-2-Mistral-7B-DPO}{NousResearch/Nous-Hermes-2-Mistral-7B-DPO} & chat \\
12 & \href{https://huggingface.co/teknium/OpenHermes-2.5-Mistral-7B}{teknium/OpenHermes-2.5-Mistral-7B} & chat \\
13 & \href{https://huggingface.co/microsoft/Phi-3-medium-4k-instruct}{microsoft/Phi-3-medium-4k-instruct} & instruct \\
14 & \href{https://huggingface.co/microsoft/Phi-3-mini-4k-instruct}{microsoft/Phi-3-mini-4k-instruct} & instruct \\
15 & \href{https://huggingface.co/microsoft/Phi-3.5-mini-instruct}{microsoft/Phi-3.5-mini-instruct} & instruct \\
16 & \href{https://huggingface.co/Qwen/Qwen1.5-0.5B-Chat}{Qwen/Qwen1.5-0.5B-Chat} & chat \\
17 & \href{https://huggingface.co/Qwen/Qwen1.5-1.8B-Chat}{Qwen/Qwen1.5-1.8B-Chat} & chat \\
18 & \href{https://huggingface.co/Qwen/Qwen1.5-14B-Chat}{Qwen/Qwen1.5-14B-Chat} & chat \\
19 & \href{https://huggingface.co/Qwen/Qwen1.5-4B-Chat}{Qwen/Qwen1.5-4B-Chat} & chat \\
20 & \href{https://huggingface.co/Qwen/Qwen2-0.5B-Instruct}{Qwen/Qwen2-0.5B-Instruct} & instruct \\
21 & \href{https://huggingface.co/Qwen/Qwen2-1.5B-Instruct}{Qwen/Qwen2-1.5B-Instruct} & instruct \\
22 & \href{https://huggingface.co/Qwen/Qwen2-7B-Instruct}{Qwen/Qwen2-7B-Instruct} & instruct \\
23 & \href{https://huggingface.co/Qwen/Qwen2.5-0.5B-Instruct}{Qwen/Qwen2.5-0.5B-Instruct} & instruct \\
24 & \href{https://huggingface.co/Qwen/Qwen2.5-14B-Instruct}{Qwen/Qwen2.5-14B-Instruct} & instruct \\
25 & \href{https://huggingface.co/Qwen/Qwen2.5-Coder-7B-Instruct}{Qwen/Qwen2.5-Coder-7B-Instruct} & instruct \\
26 & \href{https://huggingface.co/Qwen/Qwen2.5-Math-1.5B-Instruct}{Qwen/Qwen2.5-Math-1.5B-Instruct} & instruct \\
27 & \href{https://huggingface.co/Qwen/Qwen2.5-Math-7B-Instruct}{Qwen/Qwen2.5-Math-7B-Instruct} & instruct \\
28 & \href{https://huggingface.co/Qwen/Qwen3-0.6B}{Qwen/Qwen3-0.6B} & hybrid \\
29 & \href{https://huggingface.co/Qwen/Qwen3-1.7B}{Qwen/Qwen3-1.7B} & hybrid \\
30 & \href{https://huggingface.co/HuggingFaceTB/SmolLM-1.7B-Instruct}{HuggingFaceTB/SmolLM-1.7B-Instruct} & instruct \\
31 & \href{https://huggingface.co/HuggingFaceTB/SmolLM-135M-Instruct}{HuggingFaceTB/SmolLM-135M-Instruct} & instruct \\
32 & \href{https://huggingface.co/HuggingFaceTB/SmolLM-360M-Instruct}{HuggingFaceTB/SmolLM-360M-Instruct} & instruct \\
33 & \href{https://huggingface.co/HuggingFaceTB/SmolLM2-1.7B-Instruct}{HuggingFaceTB/SmolLM2-1.7B-Instruct} & instruct \\
34 & \href{https://huggingface.co/berkeley-nest/Starling-LM-7B-alpha}{berkeley-nest/Starling-LM-7B-alpha} & chat \\
35 & \href{https://huggingface.co/01-ai/Yi-1.5-6B-Chat}{01-ai/Yi-1.5-6B-Chat} & chat \\
36 & \href{https://huggingface.co/01-ai/Yi-1.5-9B-Chat}{01-ai/Yi-1.5-9B-Chat} & chat \\
37 & \href{https://huggingface.co/cognitivecomputations/dolphin-2.9.3-mistral-7B-32k}{cognitivecomputations/dolphin-2.9.3-mistral-7B-32k} & chat \\
38 & \href{https://huggingface.co/ibm-granite/granite-3.0-2b-instruct}{ibm-granite/granite-3.0-2b-instruct} & instruct \\
39 & \href{https://huggingface.co/ibm-granite/granite-3.0-8b-instruct}{ibm-granite/granite-3.0-8b-instruct} & instruct \\
40 & \href{https://huggingface.co/ibm-granite/granite-3.1-8b-instruct}{ibm-granite/granite-3.1-8b-instruct} & instruct \\
41 & \href{https://huggingface.co/ibm-granite/granite-3.2-8b-instruct}{ibm-granite/granite-3.2-8b-instruct} & instruct \\
42 & \href{https://huggingface.co/ibm-granite/granite-3.3-8b-instruct}{ibm-granite/granite-3.3-8b-instruct} & instruct \\
43 & \href{https://huggingface.co/internlm/internlm2-chat-1_8b}{internlm/internlm2-chat-1\_8b} & chat \\
44 & \href{https://huggingface.co/internlm/internlm2_5-1_8b-chat}{internlm/internlm2\_5-1\_8b-chat} & chat \\
45 & \href{https://huggingface.co/internlm/internlm2_5-7b-chat}{internlm/internlm2\_5-7b-chat} & chat \\
46 & \href{https://huggingface.co/openchat/openchat-3.5-0106}{openchat/openchat-3.5-0106} & chat \\
47 & \href{https://huggingface.co/stabilityai/stablelm-2-1_6b-chat}{stabilityai/stablelm-2-1\_6b-chat} & chat \\
48 & \href{https://huggingface.co/stabilityai/stablelm-2-zephyr-1_6b}{stabilityai/stablelm-2-zephyr-1\_6b} & chat \\
\end{longtable}
\endgroup

\subsection{Capability controls}
\label{app:capability-controls}

All controls use the same joint-layer output-metric representation, the same
48 models and capability ranks as Figure~\ref{fig:capability_shared48}, the
same layers and matrices, $k=32$, seed 0, 12 k-means iterations, and the same
architecture-specific Q/K/V splits. In every table, the \dvlkm{} value is
exactly the value reported in Figure~\ref{fig:capability_shared48}.

\noindent\textbf{Random Labels with Matched Cluster Sizes.}
For each matrix, we permute the \dvlkm{} labels 128 times while keeping the
sizes of the occupied clusters unchanged. Table~\ref{tab:capability-random-label}
compares the original coefficient with the mean and standard deviation (SD) of
these random-label results. The one-sided conditional randomization $p$-value
is
\begin{equation}
p_{\mathrm{cond}}
=
\frac{
1+\sum_{b=1}^{128}
\mathbf{1}\!\left\{
\rho^{(b)}_{\mathrm{rand}}
\geq
\rho_{\mathrm{DVL}}
\right\}
}{129},
\label{eq:capability-random-label-p}
\end{equation}
with a minimum possible value of $1/129\approx0.008$.

\begin{table}[!htbp]
\centering
\dvlTableSetup
\setlength{\tabcolsep}{4pt}
\renewcommand{\arraystretch}{1.05}
\caption{\textbf{Random-label control with matched cluster sizes on the fixed
48-model set.} All six \dvlkm{} correlations are larger than those from all
128 random-label permutations, giving the minimum possible conditional
randomization value.}
\label{tab:capability-random-label}
\begin{tabularx}{\linewidth}{>{\raggedright\arraybackslash}p{.08\linewidth}*{4}{>{\raggedleft\arraybackslash}X}}
\toprule
\dvlHead{Scope} & \dvlHead{\shortstack{\dvlkm{}\\$\rho$}} & \dvlHead{Random mean $\rho$} & \dvlHead{Random SD} & \dvlHead{$p_{\mathrm{cond}}$} \\
\midrule
\rowcolor{dvltint}\dvlScope{modulejoint}{Joint} & 0.904 & 0.216 & 0.005 & 0.008 \\
\dvlScope{moduleq}{Q}     & 0.794 & 0.217 & 0.013 & 0.008 \\
\dvlScope{modulek}{K}     & 0.701 & 0.214 & 0.015 & 0.008 \\
\dvlScope{modulev}{V}     & 0.596 & 0.180 & 0.022 & 0.008 \\
\dvlScope{moduleo}{O}     & 0.743 & 0.149 & 0.010 & 0.008 \\
\dvlScope{moduleffn}{FFN}   & 0.756 & 0.110 & 0.012 & 0.008 \\
\bottomrule
\end{tabularx}
\end{table}

These results show that the observed capability correlations are not
reproduced by random cluster assignments with the same cluster sizes.
However, the model set and capability ranks remain fixed, so these values
should not be read as significance tests for a broader model population or
for unseen model families.

\noindent\textbf{Raw-Weight Baseline.}
The raw-weight baseline clusters directions directly in the original
weight-column space. It uses the same $k=32$, seed 0, 12 iterations, models,
layers, and matrices, and evaluates the same raw-space relative sum-of-squared-errors (SSE) score.
Table~\ref{tab:capability-alternative-baselines} reports
$\Delta\rho=\rho_{\mathrm{DVL}}-\rho_{\mathrm{baseline}}$, together with a
95\% interval from 2,000 model-family-cluster bootstrap samples.

\begin{table}[!htbp]
\centering
\dvlTableSetup
\setlength{\tabcolsep}{4pt}
\renewcommand{\arraystretch}{1.05}
\caption{\textbf{Raw-weight baseline on the fixed 48-model set.}
Positive $\Delta\rho$ means that the \dvlkm{} correlation is higher.
The last column gives the model-family-cluster bootstrap interval for the
difference.}
\label{tab:capability-alternative-baselines}
\begin{tabularx}{\linewidth}{>{\raggedright\arraybackslash}p{.08\linewidth}*{4}{>{\raggedleft\arraybackslash}X}}
\toprule
\dvlHead{Scope} & \dvlHead{\shortstack{\dvlkm{}\\$\rho$}} & \dvlHead{Baseline $\rho$} & \dvlHead{$\Delta\rho$} & \dvlHead{Bootstrap 95\% CI} \\
\midrule 
\rowcolor{dvltint}\dvlScope{modulejoint}{Joint} & 0.904 & 0.800 & $+0.104$ & $[-0.023,\ 0.311]$ \\
\dvlScope{moduleq}{Q}     & 0.794 & 0.653 & $+0.141$ & $[-0.073,\ 0.457]$ \\
\dvlScope{modulek}{K}     & 0.701 & 0.519 & $+0.181$ & $[-0.083,\ 0.532]$ \\
\dvlScope{modulev}{V}     & 0.596 & 0.593 & $+0.003$ & $[-0.293,\ 0.331]$ \\
\dvlScope{moduleo}{O}     & 0.743 & 0.689 & $+0.054$ & $[-0.023,\ 0.210]$ \\
\dvlScope{moduleffn}{FFN}   & 0.756 & 0.647 & $+0.109$ & $[-0.034,\ 0.304]$ \\
\bottomrule
\end{tabularx}
\end{table}

The \dvlkm{} correlation is higher in all six comparisons. However, every
bootstrap interval includes zero, so this experiment supports a consistent
direction in the point estimates rather than a clear difference under the
bootstrap analysis.

\noindent\textbf{Removing One Model Family at a Time.}
We remove each of the 17 model families in turn and recompute the original
Spearman correlation using all remaining models. Table~\ref{tab:capability-lofo}
reports the full-set value, the range and median across the 17 removals, and
the largest absolute change.

\begin{table}[!htbp]
\centering
\dvlTableSetup
\setlength{\tabcolsep}{4pt}
\renewcommand{\arraystretch}{1.05}
\caption{\textbf{Capability correlations after removing one model family at a
time.} The range contains 17 values, each computed after removing one complete
model family.}
\label{tab:capability-lofo}
\begin{tabularx}{\linewidth}{>{\raggedright\arraybackslash}p{.08\linewidth}*{4}{>{\raggedleft\arraybackslash}X}}
\toprule
\dvlHead{Scope} & \dvlHead{\shortstack{\dvlkm{}\\$\rho$}} & \dvlHead{\shortstack{$\rho$ range\\after removal}} & \dvlHead{\shortstack{Median\\after removal}} & \dvlHead{Max. $|\Delta\rho|$} \\
\midrule
\rowcolor{dvltint}\dvlScope{modulejoint}{Joint} & 0.904 & $[0.888,\ 0.909]$ & 0.903 & 0.016 \\
\dvlScope{moduleq}{Q}     & 0.794 & $[0.755,\ 0.875]$ & 0.792 & 0.082 \\
\dvlScope{modulek}{K}     & 0.701 & $[0.643,\ 0.824]$ & 0.701 & 0.123 \\
\dvlScope{modulev}{V}     & 0.596 & $[0.535,\ 0.723]$ & 0.600 & 0.127 \\
\dvlScope{moduleo}{O}     & 0.743 & $[0.680,\ 0.812]$ & 0.749 & 0.069 \\
\dvlScope{moduleffn}{FFN}   & 0.756 & $[0.717,\ 0.851]$ & 0.748 & 0.095 \\
\bottomrule
\end{tabularx}
\end{table}

All six correlations remain positive after every family removal. Joint changes
the least, ranging from $0.888$ to $0.909$, with a maximum absolute change of
$0.016$.

\subsection{Complete-benchmark exposure}
\label{app:benchmark-exposure}

\noindent\textbf{Exposure Setup.}
The experiment covers eight models from six families and the complete BBH,
MATH, GPQA, MuSR, and MMLU-Pro test splits. Each model--benchmark arm starts
from the original checkpoint and uses that benchmark's reference answers as
training targets. Thus, exposure to one benchmark is evaluated separately
from exposure to another. After training, we recompute the corresponding
benchmark score and \dvlkm{} score. The structural score uses the same
2,000-prompt Jacobian estimation, rank-128 approximation, $k=32$, and eligible
layers as the baseline analysis.

\noindent\textbf{Rank Comparison.}
For each benchmark, the exposed checkpoint replaces only its own baseline
score; all other models retain their original scores. We apply this procedure
separately to benchmark performance and \dvlkm{}. Higher scores receive
better ranks, rank 1 is the highest, and ties receive midranks. Rank advance
is the baseline rank minus the post-exposure rank, so positive values indicate
improvement. A \dvlkm{} rank change with absolute value at most one is
classified as stable. Models already at benchmark rank 1 have no room for
further rank advance. Figure~\ref{fig:fullbench_rank_stability} displays the
paired advances for six of the eight evaluated models.

\noindent\textbf{Why IFEval Is Omitted.}
IFEval exposure training was unstable and prone to collapse in our runs.
We therefore omit IFEval from this exposure comparison and report the five
benchmarks above. IFEval remains part of the six-task capability score in
Appendix~\ref{app:capability-score}; its omission applies only to the exposure
experiment.

\section{Statistical Boundaries and Structural Inspection}
\label{app:statistical_boundaries}

\subsection{Matched-null coverage and variation}
\label{app:matched_null_breadth}

We use the same concept-mean comparison across 21 models from 13 families.
For each model, we evaluate three fixed depth fractions and five modules:
Q, K, V, O, and FFN. This gives 315 model-depth-module cells. In each cell,
12 concept-mean \dvl{} directions are compared with 192 matched Gaussian
directions, resulting in 3,780 concept-mean directions, 30,240 prompt
references, and 60,480 Gaussian directions. Table~\ref{tab:a79-models-21}
reports the model-level results, while
Tables~\ref{tab:a79-modules}--\ref{tab:a79-lofo} summarize the results by
module, family, and after removing one family at a time. All tables use the
same model names and order as the intervention figures.

Figure~\ref{fig:matched_null_21models} shows that every model has a positive
mean difference in both panels, although the amount of variation differs
across models. The overall mean also stays clearly above zero for both excess
kurtosis and Top-1\% $\ell_2$-energy. Thus, the positive pattern is shared
across models, while its size varies across models, depths, and modules.

The module-level results show a similar pattern. Q, K, O, and FFN have
positive model-level means for both measures in all 21 models. V shows a
weaker pattern, with positive means in 19 of the 21 models for each measure.
These counts are based on model-level module averages. The intervals in
Figure~\ref{fig:matched_null_21models} and
Table~\ref{tab:a79-models-21} show the variation across the 15
depth-by-module cells within each model.

We also repeat the analysis after removing each model family in turn. Across
all 13 such settings, the mean excess-kurtosis difference remains between
$2.195$ and $2.472$, while the Top-1\% $\ell_2$-energy difference remains
between $0.0403$ and $0.0432$. Both measures stay positive for every remaining
model in every setting. The positive pattern persists after removing any
one model family.

\par\noindent\begin{minipage}{\textwidth}
\centering
\dvlTableSetup
\captionsetup{hypcap=false}
\captionof{table}{\textbf{Model-level matched-null differences across 21 models.}
Each entry reports the mean $\Delta$ [95\% CI]. For each model, the interval is
computed by resampling its 15 depth-by-module cells. The last column shows how
many of the 15 cells have a positive difference in excess kurtosis and
Top-1\% $\ell_2$-energy, respectively. The final row instead resamples the 21
model means, with counts taken over models. Model names and ordering match the
intervention figures; DS-R1 denotes DeepSeek-R1-Distill-Qwen.}
\label{tab:a79-models-21}
\setlength{\tabcolsep}{3.8pt}
\renewcommand{\arraystretch}{1.05}
\begin{tabularx}{\textwidth}{>{\hsize=.9\hsize\raggedright\arraybackslash}X*{2}{>{\hsize=1.05\hsize\raggedleft\arraybackslash}X}}
\toprule
\dvlHead{Model} & \dvlHead{\shortstack[r]{Excess kurtosis\\$\Delta$ [95\% CI]}} & \dvlHead{\shortstack[r]{Top-1\% $\ell_2$-energy\\$\Delta$ [95\% CI]}} \\
\midrule
\dvlModel{deepseek}{DS-R1-1.5B}                    & $1.159\ [0.516,1.870]$ & $0.0356\ [0.0167,0.0565]$ \\
\dvlModel{deepseek}{DS-R1-7B}                      & $1.502\ [0.772,2.300]$ & $0.0308\ [0.0173,0.0453]$ \\
\dvlModel{falcon}{Falcon3-3B}                    & $2.048\ [0.694,4.183]$ & $0.0373\ [0.0212,0.0555]$ \\
\dvlModel{falcon}{Falcon3-7B}                    & $1.453\ [0.776,2.262]$ & $0.0378\ [0.0229,0.0566]$ \\
\dvlModel{ibm}{Granite-3.0-2B}                    & $2.540\ [1.401,3.888]$ & $0.0527\ [0.0331,0.0735]$ \\
\dvlModel{ibm}{Granite-3.3-8B}                    & $5.391\ [1.714,10.192]$ & $0.0647\ [0.0353,0.0996]$  \\
\dvlModel{meta}{Llama-3.2-1B}                   & $3.375\ [1.173,6.308]$ & $0.0540\ [0.0299,0.0796]$ \\
\dvlModel{meta}{Llama-3.2-3B}                   & $3.107\ [1.372,5.190]$ & $0.0533\ [0.0323,0.0747]$ \\
\dvlModel{meta}{Llama-3.1-8B}                   & $4.624\ [1.678,8.645]$ & $0.0603\ [0.0324,0.0908]$  \\
\dvlModel{mistral}{Mistral-7B}                    & $3.263\ [1.224,5.771]$ & $0.0498\ [0.0269,0.0741]$ \\
\dvlModel{ai2}{OLMo-2-7B}                      & $0.630\ [0.284,1.054]$ & $0.0205\ [0.0104,0.0324]$ \\
\dvlModel{qwen}{Qwen1.5-14B}                   & $2.858\ [0.594,6.522]$ & $0.0261\ [0.0138,0.0412]$ \\
\dvlModel{qwen}{Qwen2.5-0.5B}                  & $1.488\ [0.787,2.316]$ & $0.0393\ [0.0250,0.0550]$ \\
\dvlModel{qwen}{Qwen2.5-1.5B}                  & $2.678\ [1.230,4.815]$ & $0.0570\ [0.0351,0.0818]$ \\
\dvlModel{qwen}{Qwen2.5-3B}                    & $1.905\ [0.391,4.551]$ & $0.0282\ [0.0141,0.0467]$ \\
\dvlModel{qwen}{Qwen2.5-7B}                    & $1.950\ [0.665,3.504]$ & $0.0376\ [0.0172,0.0629]$ \\
\dvlModel{qwen}{Qwen2.5-14B}                   & $2.534\ [1.128,4.283]$ & $0.0460\ [0.0265,0.0668]$  \\
\dvlModel{qwen}{Qwen3-1.7B}                    & $2.043\ [0.966,3.283]$ & $0.0462\ [0.0260,0.0687]$ \\
\dvlModel{upstage}{SOLAR-10.7B}                   & $2.261\ [0.857,4.252]$ & $0.0377\ [0.0201,0.0579]$ \\
\dvlModel{stability}{StableLM-2-1.6B}                 & $1.693\ [0.965,2.624]$ & $0.0413\ [0.0283,0.0542]$ \\
\dvlModel{yi}{Yi-1.5-9B}                     & $1.128\ [0.561,1.735]$ & $0.0278\ [0.0146,0.0428]$ \\
\midrule
\rowcolor{dvltint}\textbf{Mean} & $\mathbf{2.363\ [1.911,2.869]}$ & $\mathbf{0.0421\ [0.0371,0.0472]}$  \\
\bottomrule
\end{tabularx}
\end{minipage}\par

\par\noindent\begin{minipage}{\textwidth}
\centering
\dvlTableSetup\fontsize{7.5pt}{8.4pt}\selectfont
\captionsetup{hypcap=false}
\captionof{table}{\textbf{Matched-null differences across all 21 models and by module.}
Each estimate is the unweighted mean of the 21 model-level values. The two
intervals are obtained by resampling models and model families, respectively.
The two $p$-values are one-sided exact sign-test and Wilcoxon-test results
computed across models.}
\label{tab:a79-modules}
\setlength{\tabcolsep}{1.5pt}
\renewcommand{\arraystretch}{1.05}
\begin{tabularx}{\textwidth}{l>{\raggedright\arraybackslash}Xrrrrrr}
\toprule
\dvlHead{Scope} & \dvlHead{Metric} & \dvlHead{Mean $\Delta$} & \dvlHead{\shortstack{95\% CI\\(model)}} & \dvlHead{\shortstack{95\% CI\\(family)}} & \dvlHead{\shortstack{Positive\\models}} & \dvlHead{Sign $p$} & \dvlHead{\shortstack{Wilcoxon\\$p$}} \\
\midrule
\rowcolor{dvltint}\dvlScope{modulejoint}{All modules} & Excess kurtosis & 2.363 & [1.911, 2.869] & [1.858, 2.958] & 21/21 & $4.768\times 10^{-7}$ & $4.768\times 10^{-7}$ \\
\rowcolor{dvltint}\dvlScope{modulejoint}{All modules} & Top-1\% $\ell_2$-energy & 0.0421 & [0.0371, 0.0472] & [0.0363, 0.0478] & 21/21 & $4.768\times 10^{-7}$ & $4.768\times 10^{-7}$ \\
\addlinespace[2pt]
\dvlScope{moduleq}{Q} & Excess kurtosis & 1.577 & [0.621, 2.823] & [0.427, 2.986] & 21/21 & $4.768\times 10^{-7}$ & $4.768\times 10^{-7}$ \\
\dvlScope{moduleq}{Q} & Top-1\% $\ell_2$-energy & 0.0264 & [0.0178, 0.0373] & [0.0148, 0.0380] & 21/21 & $4.768\times 10^{-7}$ & $4.768\times 10^{-7}$ \\
\dvlInner{1-8}
\dvlScope{modulek}{K} & Excess kurtosis & 2.447 & [1.510, 3.664] & [1.502, 3.668] & 21/21 & $4.768\times 10^{-7}$ & $4.768\times 10^{-7}$ \\
\dvlScope{modulek}{K} & Top-1\% $\ell_2$-energy & 0.0443 & [0.0330, 0.0573] & [0.0326, 0.0561] & 21/21 & $4.768\times 10^{-7}$ & $4.768\times 10^{-7}$ \\
\dvlInner{1-8}
\dvlScope{modulev}{V} & Excess kurtosis & 0.313 & [0.167, 0.504] & [0.171, 0.545] & 19/21 & $1.106\times 10^{-4}$ & $2.384\times 10^{-6}$ \\
\dvlScope{modulev}{V} & Top-1\% $\ell_2$-energy & 0.0092 & [0.0054, 0.0139] & [0.0055, 0.0146] & 19/21 & $1.106\times 10^{-4}$ & $3.338\times 10^{-6}$ \\
\dvlInner{1-8}
\dvlScope{moduleo}{O} & Excess kurtosis & 2.043 & [1.595, 2.529] & [1.642, 2.564] & 21/21 & $4.768\times 10^{-7}$ & $4.768\times 10^{-7}$ \\
\dvlScope{moduleo}{O} & Top-1\% $\ell_2$-energy & 0.0571 & [0.0466, 0.0682] & [0.0474, 0.0699] & 21/21 & $4.768\times 10^{-7}$ & $4.768\times 10^{-7}$ \\
\dvlInner{1-8}
\dvlScope{moduleffn}{FFN} & Excess kurtosis & 5.437 & [3.945, 7.103] & [3.756, 7.505] & 21/21 & $4.768\times 10^{-7}$ & $4.768\times 10^{-7}$ \\
\dvlScope{moduleffn}{FFN} & Top-1\% $\ell_2$-energy & 0.0734 & [0.0613, 0.0858] & [0.0580, 0.0885] & 21/21 & $4.768\times 10^{-7}$ & $4.768\times 10^{-7}$ \\
\bottomrule
\end{tabularx}
\end{minipage}\par

\par\noindent\begin{minipage}{\textwidth}
\centering
\dvlTableSetup
\captionsetup{hypcap=false}
\captionof{table}{\textbf{Matched-null differences by model family.} Family values are descriptive unweighted means of the model-level estimates in Table~\ref{tab:a79-models-21}.}
\label{tab:a79-families-13}
\setlength{\tabcolsep}{3.5pt}
\renewcommand{\arraystretch}{1.05}
\begin{tabular*}{\textwidth}{@{\extracolsep{\fill}}>{\raggedright\arraybackslash}p{0.15\textwidth}r>{\raggedright\arraybackslash}p{0.32\textwidth}rr@{}}
\toprule
\dvlHead{Family} & \dvlHead{$N$} & \dvlHead{Models} & \dvlHead{\shortstack{Excess kurtosis\\$\Delta$}} & \dvlHead{\shortstack{Top-1\% $\ell_2$-energy\\$\Delta$}} \\
\midrule
\dvlModel{deepseek}{DS-R1} & 2 & DS-R1-1.5B; DS-R1-7B & 1.331 & 0.0332 \\
\dvlModel{falcon}{Falcon3} & 2 & Falcon3-3B; Falcon3-7B & 1.751 & 0.0376 \\
\dvlModel{ibm}{Granite} & 2 & Granite-3.0-2B; Granite-3.3-8B & 3.965 & 0.0587 \\
\dvlModel{meta}{Llama-3.1} & 1 & Llama-3.1-8B & 4.624 & 0.0603 \\
\dvlModel{meta}{Llama-3.2} & 2 & Llama-3.2-1B; Llama-3.2-3B & 3.241 & 0.0536 \\
\dvlModel{mistral}{Mistral} & 1 & Mistral-7B & 3.263 & 0.0498 \\
\dvlModel{ai2}{OLMo-2} & 1 & OLMo-2-7B & 0.630 & 0.0205 \\
\dvlModel{qwen}{Qwen1.5} & 1 & Qwen1.5-14B & 2.858 & 0.0261 \\
\dvlModel{qwen}{Qwen2.5} & 5 & Qwen2.5-0.5B; Qwen2.5-1.5B; Qwen2.5-3B; Qwen2.5-7B; Qwen2.5-14B & 2.111 & 0.0416 \\
\dvlModel{qwen}{Qwen3} & 1 & Qwen3-1.7B & 2.043 & 0.0462 \\
\dvlModel{upstage}{SOLAR} & 1 & SOLAR-10.7B & 2.261 & 0.0377 \\
\dvlModel{stability}{StableLM} & 1 & StableLM-2-1.6B & 1.693 & 0.0413 \\
\dvlModel{yi}{Yi-1.5} & 1 & Yi-1.5-9B & 1.128 & 0.0278 \\
\bottomrule
\end{tabular*}
\end{minipage}\par

\begin{figure*}[t]
  \centering
  \includegraphics[width=\textwidth]{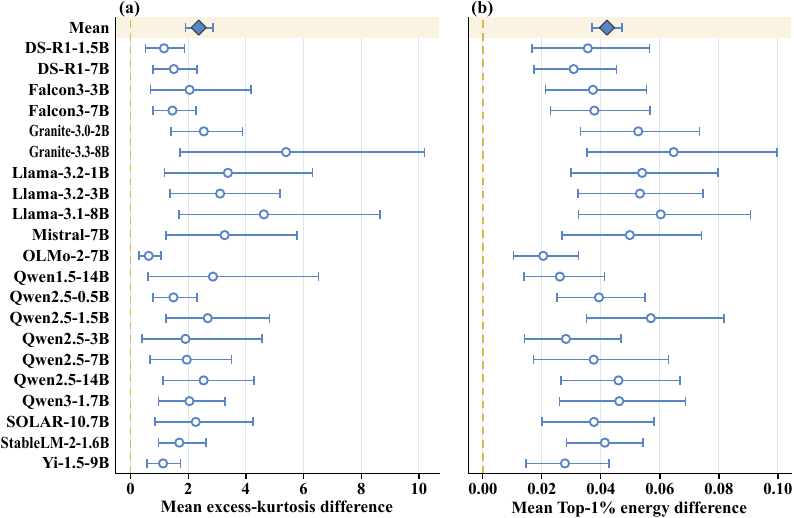}
  \caption{\textbf{Matched-null differences for the 21 models.}
  Circles are model means with cell-bootstrap 95\% intervals over the 15 depth-by-module cells; the diamond is the mean over models with a model-bootstrap 95\% interval; the gold dashed line marks zero. Both differences are positive for every model.}
  \label{fig:matched_null_21models}
\end{figure*}

\par\noindent\begin{minipage}{\textwidth}
\centering
\dvlTableSetup
\captionsetup{hypcap=false}
\captionof{table}{\textbf{Matched-null differences after removing one family at a time.} Every row removes one complete family and recomputes the unweighted model mean over the 12 families that remain. Positive models counts the retained models with a positive difference, which is the same for both metrics in every row.}
\label{tab:a79-lofo}
\setlength{\tabcolsep}{3.5pt}
\renewcommand{\arraystretch}{1.05}
\begin{tabular*}{\textwidth}{@{\extracolsep{\fill}}lrrrr@{}}
\toprule
\dvlHead{Omitted family} & \dvlHead{$N$} & \dvlHead{Excess kurtosis $\Delta$} & \dvlHead{Top-1\% $\ell_2$-energy $\Delta$} & \dvlHead{Positive models} \\
\midrule
\dvlModel{deepseek}{DS-R1} & 19 & 2.472 & 0.0430 & 19/19 \\
\dvlModel{falcon}{Falcon3} & 19 & 2.428 & 0.0426 & 19/19 \\
\dvlModel{ibm}{Granite} & 19 & 2.195 & 0.0403 & 19/19 \\
\dvlModel{meta}{Llama-3.1} & 20 & 2.250 & 0.0412 & 20/20 \\
\dvlModel{meta}{Llama-3.2} & 19 & 2.271 & 0.0409 & 19/19 \\
\dvlModel{mistral}{Mistral} & 20 & 2.318 & 0.0417 & 20/20 \\
\dvlModel{ai2}{OLMo-2} & 20 & 2.450 & 0.0432 & 20/20 \\
\dvlModel{qwen}{Qwen1.5} & 20 & 2.339 & 0.0429 & 20/20 \\
\dvlModel{qwen}{Qwen2.5} & 16 & 2.442 & 0.0422 & 16/16 \\
\dvlModel{qwen}{Qwen3} & 20 & 2.379 & 0.0419 & 20/20 \\
\dvlModel{upstage}{SOLAR} & 20 & 2.368 & 0.0423 & 20/20 \\
\dvlModel{stability}{StableLM} & 20 & 2.397 & 0.0421 & 20/20 \\
\dvlModel{yi}{Yi-1.5} & 20 & 2.425 & 0.0428 & 20/20 \\
\bottomrule
\end{tabular*}
\end{minipage}\par

\subsection{Fixed functional inspection}
\label{app:frozen_functional_inspection}

We fix the selection rule before examining any token. We consider five models,
seven scopes (Q, K, V, O, FFN, Q+K+V+O, and Q+K+V+O+FFN), and directions with
high, median, and low cohesion, giving 105 single-direction cases. For each
case, we inspect both score tails.

Figure~\ref{fig:token_cases} shows two forms of structure in Mistral-7B. One Q
direction separates a group of field-related tokens from a group of
complete-related tokens. In the centroid examples, one group shows a clear
terrain-versus-grade pattern, while other groups contain many token fragments
and are harder to interpret. These examples show that some vocabulary tails
have clear and readable structure, while others are less easy to describe.
They therefore support local interpretability without assuming that every
parameter or cluster has a single clean meaning.

\begin{figure*}[!tbp]
  \centering
  \includegraphics[width=\textwidth]{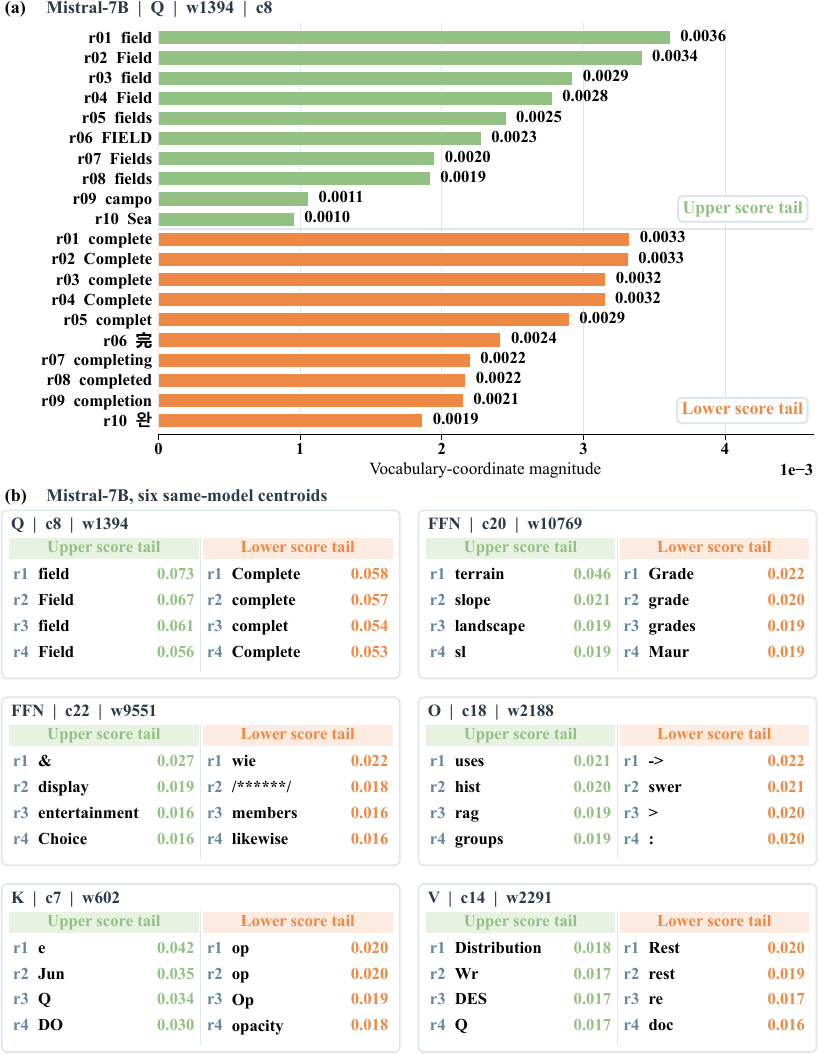}
  \caption{\textbf{Single directions and same-model centroids expose distinct vocabulary-tail groups whose readability varies.}
  \textbf{(a)} One fixed Mistral-7B query-projection direction: the upper score tail is dominated by variants of the word field and the lower score tail by variants of the word complete, with coordinate magnitudes and within-tail ordering retained.
  \textbf{(b)} Six fixed centroids from the same model, including both concept-like and fragment-heavy groups, with four tokens from each score tail per centroid.
  These selected cases illustrate organization, not population readability.}
  \label{fig:token_cases}
\end{figure*}

\section{Intervention Details and Cases}
\label{app:interventions}

\subsection{Aggregate intervention setup and denominators}

The aggregate results in Section~\ref{sec:intervention} cover 21 models, ten
concepts, six scopes, three operations, and eight intervention fractions.
Each model contributes
$10\times6\times3\times8=1{,}440$ concept-level cells, giving 30,240 cells in
total and 3,024 aggregate rows for each measure. The full-Jacobian/vector--Jacobian-product directions,
eligible layers, intervention strengths, generation-time edits, and model set
are kept fixed throughout the experiments.

\noindent\textbf{Intervention Operators, Rankings, and Strengths.}
For a selected parameter column $w_i$, let $u_c$ be the unit-normalized
full-Jacobian/vector--Jacobian-product direction for concept $c$, let $u_t$ denote the paired target
direction, and let $\alpha$ be the fixed strength for the corresponding model,
operation, and scope. Ablation applies
$w_i\leftarrow w_i-\alpha u_c(u_c^\top w_i)$, while steering applies
$w_i\leftarrow w_i+\alpha\lVert w_i\rVert_2u_t$.
Swapping uses separate source and target rankings: source columns receive the
ablation update for $u_c$, while target columns receive the steering update
for $u_t$.

The edits remain active during next-token scoring, full-answer scoring, and
the complete 16-token continuation, after which all edited columns are
restored exactly. For each concept, scope, and operation, columns follow one
fixed full-Jacobian ranking. The eight intervention fractions are nested prefixes of
this ranking, with no filtering based on decoded outputs.

At the reference fraction of $0.2\%$, $\alpha$ is selected separately for
each model, operation, and scope from
${0.25,0.5,1,2,4,8}$. We choose the value with the highest mean of the
first-token and full-answer probability-direction indicators across the ten
concepts, breaking ties in favor of the smaller strength. This value is then
used for all reported fractions.

The conditional decoded success rate is evaluated only for concept cells
whose clean continuation contains the original answer. Among the 3,024
aggregate rows, 2,880 have a nonzero denominator and 144 are undefined.

\subsection{Complete conditional decoded success-rate grid}
\label{app:decoded_all_models}

Figure~\ref{fig:intervention_all_models} extends Figure~\ref{fig:intervention_budget} to all 21 models with the same scopes, operations, and fractions.

\begin{figure*}[!tbp]
  \centering
  \includegraphics[width=\textwidth]{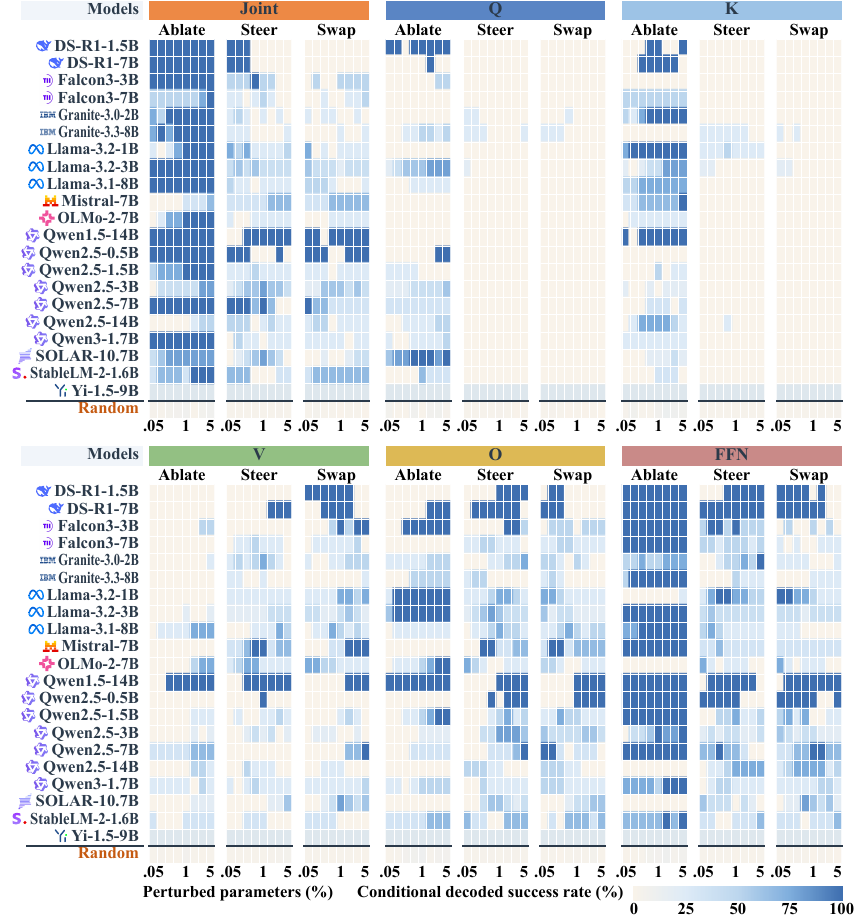}
  \caption{\textbf{Conditional decoded success rates for all 21 models.}
  Layout as in Figure~\ref{fig:intervention_budget}; the bottom row is the matched random baseline averaged over defined model rows; grey cells have a zero denominator.}
  \label{fig:intervention_all_models}
\end{figure*}

\subsection{Future-token probability direction}
\label{app:probability_direction}

Figure~\ref{fig:probability_direction_appendix} reports the probability-based
measure separately from the conditional decoded success rate. A concept cell counts
as successful when at least one future answer token moves in the required
direction at one or more checked positions. Since all ten concepts contribute
to every aggregate row, all 3,024 rows are defined.

At Joint scope, the highest pooled rates are $209/210$ for ablation at
$1\%$, $3\%$, and $5\%$; $210/210$ for steering from $0.05\%$ through
$0.2\%$; and $198/210$ for swapping at $0.05\%$ and $0.2\%$.

This measure checks several future positions and answer tokens without
correcting for the number of checks. The high rates show that this broad
event is frequent under the fixed protocol.

\begin{figure*}[!tbp]
  \centering
  \includegraphics[width=\textwidth]{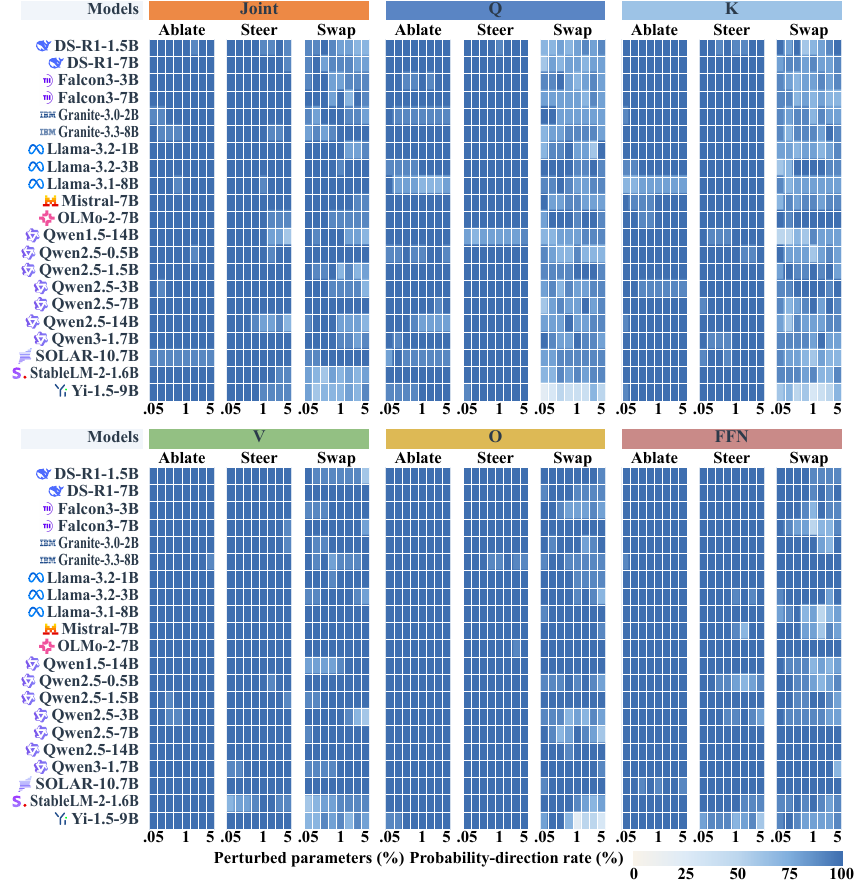}
  \caption{\textbf{Future-token probability-direction events for the 21 models.}
  Layout and scale as in Figure~\ref{fig:intervention_all_models}. A concept counts when at least one future answer token moves in the specified direction at any checked position; all 3,024 rows are defined. This event is not a decoded-success measure and carries no multiplicity correction.}
  \label{fig:probability_direction_appendix}
\end{figure*}

Figure~\ref{fig:probability_direction_appendix} shows high rates across many
models and module scopes, especially for steering at small fractions, while
differences across operations and scopes remain visible. The near-saturation
of some settings is consistent with this event being relatively easy to
satisfy.

\subsection{Matched random-column and same-norm control}
\label{app:random-control}

Each of the 3,024 primary model--scope--operation--fraction rows has a matched
random-control row over the same concepts. The control keeps the checkpoint,
prompt, module scope, eligible matrices and layers, operation, intervention
fraction, strength, clean-generation denominator, future-token positions, and
evaluation rules unchanged.

Within each target matrix, the selected columns are replaced by the same
number of distinct non-primary columns. These columns are sampled from within
$\pm5\%$ of the original weight-column norm rank, using the nearest available
rank when necessary. For each target, we then sample an independent random
direction and rescale the update so that both the columnwise and total
Frobenius norms match the paired primary intervention. A cell is discarded if
the relative norm mismatch is larger than $2.5\%$.

One matched random draw is generated for every primary cell using a fixed
seed. Figures~\ref{fig:intervention_budget} and \ref{fig:intervention_all_models} show the model-averaged control as
their bottom row; Figure~\ref{fig:random_intervention_control} shows the full
control matrix. Averages include only model rows with a defined paired conditional
decoded denominator; undefined cases are shown in grey.

Across all 144 matched scope--operation--fraction settings, the mean
conditional decoded success rate is $30.82\%$ for the primary intervention
and $3.13\%$ for the control. At Joint scope, the corresponding averages are
$49.49\%$ and $3.27\%$. The overall difference of $27.69$ percentage points
is based on this single matched random draw and does not measure uncertainty
across repeated random controls.

\begin{figure*}[!tbp]
  \centering
  \includegraphics[width=\textwidth]{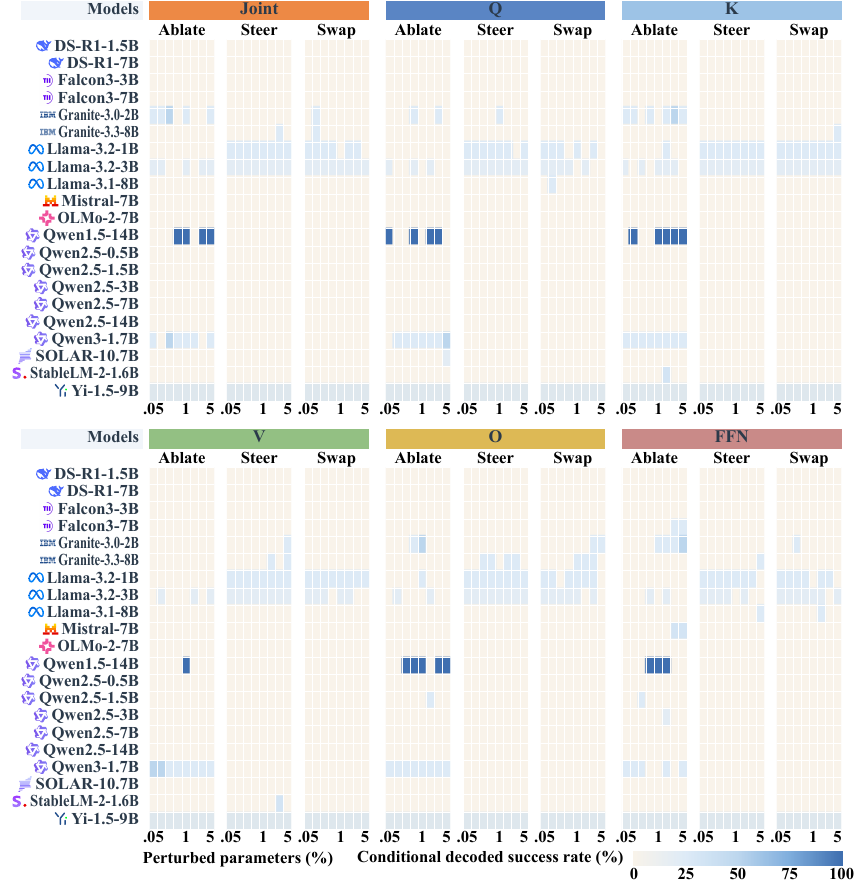}
  \caption{\textbf{Matched random-column and same-norm control for the 21 models.}
  Each control cell keeps the primary cell's model, concept, scope, operation, fraction, strength, eligible layers, denominator, and evaluation rule, but uses norm-matched non-primary columns and isotropic directions rescaled to the primary update norms. Grey cells have a zero denominator. The conditional decoded success rate averages $3.13\%$ against $30.82\%$ for the primary selection; this single draw is a reference control, not a null distribution.}
  \label{fig:random_intervention_control}
\end{figure*}

Figure~\ref{fig:random_intervention_control} shows that the matched control has
low rates across all six scopes and does not reproduce the broad high-rate
regions of the primary intervention in Figure~\ref{fig:intervention_all_models}.
A small number of nonzero control cells remain, so the comparison does not
imply that every random intervention has no effect.

\subsection{Qualitative intervention cases}

Tables~\ref{tab:e_case_granite}--\ref{tab:e_case_stablelm} present 36 selected
cases across six models, with one case for each of Q, K, V, O, FFN, and Joint.
We select models with complete module coverage and cases with readable
continuations that satisfy the recorded decoded-success criterion.

The tables combine the archived fixed-$2\%$ replay run with successful cells
from the factual intervention grid in Figure~\ref{fig:intervention_all_models}.
Each row retains its original operation, perturbed fraction, and strength
$\alpha$, followed by the prompt and clean and edited continuation excerpts.
For ablation, the edited text omits the source answer; steering and swapping
cases show the paired target. Ellipses indicate omitted text.
Figure~\ref{fig:intervention_cases_main} highlights four factual cases in the main text.

\begin{table}[H]
  \centering
  \begingroup
  \fontsize{9}{10.4}\selectfont
  \setlength{\tabcolsep}{3pt}
  \renewcommand{\arraystretch}{1.09}
  \caption{\dvlModel{ibm}{Granite-3.3-8B}: decoded intervention cases across all six scopes.}
  \label{tab:e_case_granite}
  \label{tab:intervention_cases_all}
  \begin{tabularx}{\textwidth}{@{}>{\raggedright\arraybackslash}p{53pt}>{\raggedright\arraybackslash}p{105pt}>{\columncolor{cyan!3}\raggedright\arraybackslash}X>{\columncolor{orange!4}\raggedright\arraybackslash}X@{}}
  \toprule
  \textbf{Scope / edit} & \textbf{Prompt excerpt} & \textcolor{cyan!55!black}{\textbf{Clean}} & \textcolor{orange!80!black}{\textbf{Edited}} \\
  \midrule
  \textbf{Q} $\cdot$ Ablate\newline 2\%, $\alpha{=}0.5$\newline {\fontsize{8}{9}\selectfont Replay} & Discounted cash flow is used for valuation in this field. \ldots{} & finance \ldots{} & programming \ldots{} \\
  \addlinespace[3pt]
  \cmidrule(lr){1-4}
  \textbf{K} $\cdot$ Swap\newline 2\%, $\alpha{=}2$\newline {\fontsize{8}{9}\selectfont Replay} & \ldots{} An isotope has a characteristic number of protons but a different number of neutrons here. \ldots{} & chemistry \ldots{} & Physics \ldots{} \\
  \addlinespace[3pt]
  \cmidrule(lr){1-4}
  \textbf{V} $\cdot$ Steer\newline 2\%, $\alpha{=}0.5$\newline {\fontsize{8}{9}\selectfont Replay} & \ldots{} The Dolomites rise in the northeastern part of this country. \ldots{} & Italy \ldots{} & France \ldots{} \\
  \addlinespace[3pt]
  \cmidrule(lr){1-4}
  \textbf{O} $\cdot$ Steer\newline 0.2\%, $\alpha{=}8$\newline {\fontsize{8}{9}\selectfont Factual} & Norbert Wiener's expertise is & in the field of mathematics and engineering, particularly in the areas of control systems & evident in the harmony and rhythm of the music, and the \\
  \addlinespace[3pt]
  \cmidrule(lr){1-4}
  \textbf{FFN} $\cdot$ Ablate\newline 2\%, $\alpha{=}0.25$\newline {\fontsize{8}{9}\selectfont Replay} & Choirs coordinate pitch, phrasing, and vocal blend in this discipline. \ldots{} & music \ldots{} & mathematics \ldots{} \\
  \addlinespace[3pt]
  \cmidrule(lr){1-4}
  \textbf{Joint} $\cdot$ Swap\newline 2\%, $\alpha{=}0.5$\newline {\fontsize{8}{9}\selectfont Replay} & \ldots{} Reaction heat and enthalpy are topics in this discipline. \ldots{} & chemistry \ldots{} & Physics \ldots{} \\
  \bottomrule
  \end{tabularx}
  \endgroup
\end{table}
\begin{table}[H]
  \centering
  \begingroup
  \fontsize{9}{10.4}\selectfont
  \setlength{\tabcolsep}{3pt}
  \renewcommand{\arraystretch}{1.09}
  \caption{\dvlModel{meta}{Llama-3.2-3B}: decoded intervention cases across all six scopes.}
  \label{tab:e_case_llama32}
  \begin{tabularx}{\textwidth}{@{}>{\raggedright\arraybackslash}p{53pt}>{\raggedright\arraybackslash}p{105pt}>{\columncolor{cyan!3}\raggedright\arraybackslash}X>{\columncolor{orange!4}\raggedright\arraybackslash}X@{}}
  \toprule
  \textbf{Scope / edit} & \textbf{Prompt excerpt} & \textcolor{cyan!55!black}{\textbf{Clean}} & \textcolor{orange!80!black}{\textbf{Edited}} \\
  \midrule
  \textbf{Q} $\cdot$ Ablate\newline 2\%, $\alpha{=}0.5$\newline {\fontsize{8}{9}\selectfont Replay} & Molecular spectroscopy identifies substances in this science. \ldots{} & Chemistry. \ldots{} & Physics. \ldots{} \\
  \addlinespace[3pt]
  \cmidrule(lr){1-4}
  \textbf{K} $\cdot$ Ablate\newline 0.2\%, $\alpha{=}2$\newline {\fontsize{8}{9}\selectfont Factual} & Shawn Fanning's profession is a & computer programmer and entrepreneur. He is best known for co-founding Napster, & professional wrestler, actor, and stuntman. He is best known for his role \\
  \addlinespace[3pt]
  \cmidrule(lr){1-4}
  \textbf{V} $\cdot$ Swap\newline 0.05\%, $\alpha{=}8$\newline {\fontsize{8}{9}\selectfont Factual} & William Herschel's expertise is & in the field of astronomy, particularly in the study of the universe's structure and & in the field of biology, genetics, and evolutionary biology. He is a renowned \\
  \addlinespace[3pt]
  \cmidrule(lr){1-4}
  \textbf{O} $\cdot$ Swap\newline 0.05\%, $\alpha{=}4$\newline {\fontsize{8}{9}\selectfont Factual} & William Herschel's expertise is & in the field of astronomy, particularly in the study of the universe's structure and & in genetics and biology, and he is a renowned biologist. He is a cell \\
  \addlinespace[3pt]
  \cmidrule(lr){1-4}
  \textbf{FFN} $\cdot$ Swap\newline 2\%, $\alpha{=}0.5$\newline {\fontsize{8}{9}\selectfont Replay} & Fugue, symphony, concerto, and sonata are forms in this discipline. \ldots{} & music. \ldots{} & mathematics. \ldots{} \\
  \addlinespace[3pt]
  \cmidrule(lr){1-4}
  \textbf{Joint} $\cdot$ Ablate\newline 0.05\%, $\alpha{=}8$\newline {\fontsize{8}{9}\selectfont Factual} & William Herschel's expertise is & in the field of astronomy, particularly in the study of the universe's structure and & in the field of chemistry, particularly in the area of organic chemistry. His research \\
  \bottomrule
  \end{tabularx}
  \endgroup
\end{table}
\begin{table}[H]
  \centering
  \begingroup
  \fontsize{9}{10.4}\selectfont
  \setlength{\tabcolsep}{3pt}
  \renewcommand{\arraystretch}{1.09}
  \caption{\dvlModel{meta}{Llama-3.1-8B}: decoded intervention cases across all six scopes.}
  \label{tab:e_case_llama31}
  \begin{tabularx}{\textwidth}{@{}>{\raggedright\arraybackslash}p{53pt}>{\raggedright\arraybackslash}p{105pt}>{\columncolor{cyan!3}\raggedright\arraybackslash}X>{\columncolor{orange!4}\raggedright\arraybackslash}X@{}}
  \toprule
  \textbf{Scope / edit} & \textbf{Prompt excerpt} & \textcolor{cyan!55!black}{\textbf{Clean}} & \textcolor{orange!80!black}{\textbf{Edited}} \\
  \midrule
  \textbf{Q} $\cdot$ Ablate\newline 0.5\%, $\alpha{=}1$\newline {\fontsize{8}{9}\selectfont Factual} & The domain of work of Henri Becquerel is & a very important part of the history of physics. He was a French physicist who & a good example of the way in which the scientific community has been able to recognize \\
  \addlinespace[3pt]
  \cmidrule(lr){1-4}
  \textbf{K} $\cdot$ Ablate\newline 0.05\%, $\alpha{=}4$\newline {\fontsize{8}{9}\selectfont Factual} & William Herschel's expertise is & in the field of astronomy, particularly in the study of the solar system and the & in the field of mathematics, specifically in the area of algebraic geometry. His \\
  \addlinespace[3pt]
  \cmidrule(lr){1-4}
  \textbf{V} $\cdot$ Swap\newline 0.5\%, $\alpha{=}2$\newline {\fontsize{8}{9}\selectfont Factual} & Norbert Wiener's expertise is & in the field of mathematics, particularly in the areas of harmonic analysis, stochastic processes & in the field of music, and he is a renowned composer and pianist. \\
  \addlinespace[3pt]
  \cmidrule(lr){1-4}
  \textbf{O} $\cdot$ Swap\newline 2\%, $\alpha{=}1$\newline {\fontsize{8}{9}\selectfont Replay} & \ldots{} Arithmetic operations on fractions are elementary work in this subject. \ldots{} & mathematics. (Fractions are a fundamental concept in mathematics.) (Source: Stanford & music \ldots{} \\
  \addlinespace[3pt]
  \cmidrule(lr){1-4}
  \textbf{FFN} $\cdot$ Steer\newline 0.1\%, $\alpha{=}0.5$\newline {\fontsize{8}{9}\selectfont Factual} & William Herschel's expertise is & in the field of astronomy, particularly in the study of the solar system and the & in biology and genetics, with a focus on the biology of aging and the genetics \\
  \addlinespace[3pt]
  \cmidrule(lr){1-4}
  \textbf{Joint} $\cdot$ Swap\newline 2\%, $\alpha{=}0.25$\newline {\fontsize{8}{9}\selectfont Replay} & \ldots{} The euro-using republic between Spain and Germany is this country. \ldots{} & France. (The country between Spain and Germany is France, and the country that & Italy. (Italy is the country between Spain and Germany, and it uses the \\
  \bottomrule
  \end{tabularx}
  \endgroup
\end{table}
\begin{table}[H]
  \centering
  \begingroup
  \fontsize{9}{10.4}\selectfont
  \setlength{\tabcolsep}{3pt}
  \renewcommand{\arraystretch}{1.09}
  \caption{\dvlModel{mistral}{Mistral-7B}: decoded intervention cases across all six scopes.}
  \label{tab:e_case_mistral}
  \begin{tabularx}{\textwidth}{@{}>{\raggedright\arraybackslash}p{53pt}>{\raggedright\arraybackslash}p{105pt}>{\columncolor{cyan!3}\raggedright\arraybackslash}X>{\columncolor{orange!4}\raggedright\arraybackslash}X@{}}
  \toprule
  \textbf{Scope / edit} & \textbf{Prompt excerpt} & \textcolor{cyan!55!black}{\textbf{Clean}} & \textcolor{orange!80!black}{\textbf{Edited}} \\
  \midrule
  \textbf{Q} $\cdot$ Swap\newline 2\%, $\alpha{=}4$\newline {\fontsize{8}{9}\selectfont Replay} & \ldots{} A spectrometer can identify a substance from the wavelengths it absorbs in this science. \ldots{} & chemistry \ldots{} & Physics \ldots{} \\
  \addlinespace[3pt]
  \cmidrule(lr){1-4}
  \textbf{K} $\cdot$ Ablate\newline 0.5\%, $\alpha{=}2$\newline {\fontsize{8}{9}\selectfont Factual} & Norbert Wiener's expertise is & in the field of mathematics, but he is also known for his work in & in the field of international law, international relations, and international organizations. He is \\
  \addlinespace[3pt]
  \cmidrule(lr){1-4}
  \textbf{V} $\cdot$ Steer\newline 0.05\%, $\alpha{=}4$\newline {\fontsize{8}{9}\selectfont Factual} & Norbert Wiener's expertise is & in the field of mathematics, but he is also known for his work in & in the field of music, and he is a composer of some distinction. He \\
  \addlinespace[3pt]
  \cmidrule(lr){1-4}
  \textbf{O} $\cdot$ Swap\newline 2\%, $\alpha{=}2$\newline {\fontsize{8}{9}\selectfont Replay} & \ldots{} Ecological succession describes how communities change after disturbance in this field. \ldots{} & biology & astronomy \\
  \addlinespace[3pt]
  \cmidrule(lr){1-4}
  \textbf{FFN} $\cdot$ Ablate\newline 2\%, $\alpha{=}0.5$\newline {\fontsize{8}{9}\selectfont Replay} & \ldots{} This euro-using country lies between Austria and the Mediterranean Sea. \ldots{} & Italy \ldots{} & France \ldots{} \\
  \addlinespace[3pt]
  \cmidrule(lr){1-4}
  \textbf{Joint} $\cdot$ Swap\newline 2\%, $\alpha{=}0.5$\newline {\fontsize{8}{9}\selectfont Replay} & \ldots{} Testimony is given under oath and may be challenged by cross-examination here. \ldots{} & Law & Medicine \ldots{} \\
  \bottomrule
  \end{tabularx}
  \endgroup
\end{table}
\begin{table}[H]
  \centering
  \begingroup
  \fontsize{9}{10.4}\selectfont
  \setlength{\tabcolsep}{3pt}
  \renewcommand{\arraystretch}{1.09}
  \caption{\dvlModel{qwen}{Qwen2.5-3B}: decoded intervention cases across all six scopes.}
  \label{tab:e_case_qwen3}
  \begin{tabularx}{\textwidth}{@{}>{\raggedright\arraybackslash}p{53pt}>{\raggedright\arraybackslash}p{105pt}>{\columncolor{cyan!3}\raggedright\arraybackslash}X>{\columncolor{orange!4}\raggedright\arraybackslash}X@{}}
  \toprule
  \textbf{Scope / edit} & \textbf{Prompt excerpt} & \textcolor{cyan!55!black}{\textbf{Clean}} & \textcolor{orange!80!black}{\textbf{Edited}} \\
  \midrule
  \textbf{Q} $\cdot$ Ablate\newline 0.05\%, $\alpha{=}8$\newline {\fontsize{8}{9}\selectfont Factual} & Grand Mosque of Paris, which is located in & the 10th arrondissement of Paris, France, is a mosque & the 12th arrondissement, is a mosque for Muslims. It \\
  \addlinespace[3pt]
  \cmidrule(lr){1-4}
  \textbf{K} $\cdot$ Ablate\newline 0.05\%, $\alpha{=}0.25$\newline {\fontsize{8}{9}\selectfont Factual} & Grand Mosque of Paris, which is located in & the 10th arrondissement of Paris, France, is a mosque & the 10th arrondissement, is a mosque that was built in \\
  \addlinespace[3pt]
  \cmidrule(lr){1-4}
  \textbf{V} $\cdot$ Steer\newline 0.2\%, $\alpha{=}2$\newline {\fontsize{8}{9}\selectfont Factual} & Limoncello, that was created in & the south of Italy, is a clear, yellow liqueur made from the zest & the south of France, is a clear, yellow liqueur made from the zest \\
  \addlinespace[3pt]
  \cmidrule(lr){1-4}
  \textbf{O} $\cdot$ Steer\newline 0.2\%, $\alpha{=}0.5$\newline {\fontsize{8}{9}\selectfont Factual} & Limoncello, that was created in & the south of Italy, is a clear, yellow liqueur made from the zest & the south of France, is a clear and very light yellow liqueur made from \\
  \addlinespace[3pt]
  \cmidrule(lr){1-4}
  \textbf{FFN} $\cdot$ Swap\newline 0.05\%, $\alpha{=}4$\newline {\fontsize{8}{9}\selectfont Factual} & William Herschel's expertise is & in the field of astronomy. He discovered Uranus and made significant contributions to the & in the field of biology, specifically in the area of bioinformatics. He has \\
  \addlinespace[3pt]
  \cmidrule(lr){1-4}
  \textbf{Joint} $\cdot$ Steer\newline 0.05\%, $\alpha{=}0.25$\newline {\fontsize{8}{9}\selectfont Factual} & Limoncello, that was created in & the south of Italy, is a clear, yellow liqueur made from the zest & the south of France, is a clear yellow liqueur made from the zest of \\
  \bottomrule
  \end{tabularx}
  \endgroup
\end{table}
\begin{table}[H]
  \centering
  \begingroup
  \fontsize{9}{10.4}\selectfont
  \setlength{\tabcolsep}{3pt}
  \renewcommand{\arraystretch}{1.09}
  \caption{\dvlModel{stability}{StableLM-2-1.6B}: decoded intervention cases across all six scopes.}
  \label{tab:e_case_stablelm}
  \begin{tabularx}{\textwidth}{@{}>{\raggedright\arraybackslash}p{53pt}>{\raggedright\arraybackslash}p{105pt}>{\columncolor{cyan!3}\raggedright\arraybackslash}X>{\columncolor{orange!4}\raggedright\arraybackslash}X@{}}
  \toprule
  \textbf{Scope / edit} & \textbf{Prompt excerpt} & \textcolor{cyan!55!black}{\textbf{Clean}} & \textcolor{orange!80!black}{\textbf{Edited}} \\
  \midrule
  \textbf{Q} $\cdot$ Steer\newline 2\%, $\alpha{=}0.5$\newline {\fontsize{8}{9}\selectfont Replay} & \ldots{} The canals and gondolas of Venice attract visitors to this country. \ldots{} & Italy \ldots{} & France \ldots{} \\
  \addlinespace[3pt]
  \cmidrule(lr){1-4}
  \textbf{K} $\cdot$ Ablate\newline 1\%, $\alpha{=}0.25$\newline {\fontsize{8}{9}\selectfont Factual} & Grand Mosque of Paris, which is located in & the heart of Paris, France, is one of the most famous and largest mosques & the heart of Paris, is one of the most famous and iconic mosques in the \\
  \addlinespace[3pt]
  \cmidrule(lr){1-4}
  \textbf{V} $\cdot$ Swap\newline 2\%, $\alpha{=}8$\newline {\fontsize{8}{9}\selectfont Replay} & \ldots{} Diversification reduces concentration risk in this discipline. \ldots{} & Diversification reduces concentration risk in finance. \ldots{} & Diversification reduces concentration risk in the field of programming. \ldots{} \\
  \addlinespace[3pt]
  \cmidrule(lr){1-4}
  \textbf{O} $\cdot$ Ablate\newline 2\%, $\alpha{=}4$\newline {\fontsize{8}{9}\selectfont Replay} & A researcher examining DNA inheritance works in this branch of science. \ldots{} & The researcher examines DNA inheritance in the field of biology. \ldots{} & A researcher examining DNA inheritance works in the field of astronomy. \ldots{} \\
  \addlinespace[3pt]
  \cmidrule(lr){1-4}
  \textbf{FFN} $\cdot$ Swap\newline 2\%, $\alpha{=}4$\newline {\fontsize{8}{9}\selectfont Replay} & \ldots{} A white coat, stethoscope, and examination room are associated with this profession. \ldots{} & Medical Doctor (MD) \ldots{} & Law Enforcement Officer \ldots{} \\
  \addlinespace[3pt]
  \cmidrule(lr){1-4}
  \textbf{Joint} $\cdot$ Ablate\newline 2\%, $\alpha{=}2$\newline {\fontsize{8}{9}\selectfont Replay} & A hospital resident training under senior physicians works in this profession. \ldots{} & A hospital resident training under senior physicians works in the profession of medicine. \ldots{} & A resident in a specific field of study, such as law or a related field \\
  \bottomrule
  \end{tabularx}
  \endgroup
\end{table}

\section{Additional \dvl{} Computation Details}
\label{app:dvl_computation}

This section records the contractions, the reusable operators, and the input-conditioned scores used by the reported experiments.

\noindent\textbf{Concept-Space Contractions.}
The parameter signal requires only the Jacobian contraction
$\bar J_{\ell,m}^{\top}s_c$, rather than the full Jacobian
$\bar J_{\ell,m}$.  For each prompt $x$, this contraction can be
computed directly as
\begin{equation}
    J_{\ell,m\rightarrow L}(x)^\top s_c
    =
    \nabla_{h_{\ell,m}(x)}
    \left[
        s_c^\top z(x)
    \right],
    \label{eq:concept_vjp}
\end{equation}
so that
\begin{equation}
    d_{\ell,m,c}
    =
    \frac{1}{|\mathcal D_J|}
    \sum_{x\in\mathcal D_J}
    \nabla_{h_{\ell,m}(x)}
    \left[
        s_c^\top z(x)
    \right].
    \label{eq:concept_vjp_average}
\end{equation}
A vector--Jacobian product therefore produces the required
$d_{\ell,m,c}$ without forming $J_{\ell,m\rightarrow L}(x)$, and
several concepts can be evaluated together.

When concepts from the same output space are queried repeatedly, we
further reuse these computations across concepts.  Let
$\mathcal S=\operatorname{span}\{s_1,\ldots,s_C\}$ and let
$Q\in\mathbb{R}^{r\times d_{\mathrm{model}}}$ contain an orthonormal
basis of $\mathcal S$ in its rows.  We precompute
$B_{\ell,m}=Q\bar J_{\ell,m}$.  Since
$Q^\top Qs_c=s_c$ for every $s_c\in\mathcal S$,
\begin{equation}
    d_{\ell,m,c}
    =
    \bar J_{\ell,m}^{\top}s_c
    =
    B_{\ell,m}^{\top}Qs_c.
    \label{eq:reusable_signal}
\end{equation}
Hence $B_{\ell,m}$ is computed once for the concept space, after which
each concept signal is recovered by one matrix multiplication,
without a further backward pass through the model.

\noindent\textbf{Input-Conditioned Intervention Scores.}
The parameter signal captures the association between a concept and
learned parameters over the Jacobian-estimation prompts.  For a particular
intervention, input-dependent activations identify the parameters
participating in the relevant behavior.

For concept $c$, define
$\ell_c(x)=\log\sum_{v\in\mathcal T_c}p(v\mid x)$ and
$\widehat d_{\ell,m,c}
=d_{\ell,m,c}/(\|d_{\ell,m,c}\|_2+\varepsilon)$.
At position $t$, we measure the sensitivity of the concept probability
to this signal as
\begin{equation}
    q_{\ell,m,c,t}(x)
    =
    \left(
        \nabla_{h_{\ell,m,t}}
        \ell_c(x)
    \right)^\top
    \widehat d_{\ell,m,c}.
    \label{eq:concept_sensitivity}
\end{equation}
Combining this sensitivity with the corresponding input-dependent
activation gives
\begin{equation}
    \alpha_{\ell,m,i,c}(x)
    =
    \frac{1}{|\mathcal P(x)|}
    \sum_{t\in\mathcal P(x)}
    q_{\ell,m,c,t}(x)
    f_{\ell,m,i,t}(x).
    \label{eq:parameter_sensitivity}
\end{equation}
For removing an existing concept, we use
$M^{\mathrm{remove}}_{\ell,m,i,c}
=\alpha_{\ell,m,i,c}\beta_{\ell,m,i,c}$;
for adding a target concept, we use
$M^{\mathrm{add}}_{\ell,m,i,c}
=\alpha_{\ell,m,i,c}\|w_{\ell,m,i}\|_2$.
We aggregate these quantities over calibration prompts and rank the
eligible parameter columns within the specified module scope.

\noindent\textbf{Shared Module-Level Readout Operator.}
All parameters in the same weight matrix share the same mapping from
the module output to the final vocabulary.  We therefore define the
module-level readout operator
$
    A_a
    =
    \WU\bar J_a.
$
For an individual parameter $w_{a,i}$, its \dvl{} readout is simply
$\bar r_{a,i}=A_aw_{a,i}$.  Since
$W_a=[w_{a,1},\ldots,w_{a,n_a}]$, the readouts of all parameters can
be computed jointly as
\begin{equation}
    R_a
    =
    A_aW_a
    =
    \WU\bar J_aW_a.
    \label{eq:dvl_matrix_readout}
\end{equation}
The mapping $A_a$ is therefore computed once for each module and
shared across all of its parameter columns, after which the full
\dvl{} readout matrix follows from one matrix multiplication.
The columns of $R_a$ are then normalized and used directly for
clustering.

\allowdisplaybreaks
\newcommand{\LSE}{\operatorname{LSE}}
\section{Theory of the \dvlkm{} Score}
\label{app:score_theory}

The \dvlkm{} score is the normalized parameter energy removed by
replacing each column with its role centroid. We first establish this
identity, then relate the removed energy to the first-order and finite
cross-entropy costs of the replacement.

\subsection{Setting}
\label{sec:score_setting}

\noindent\textbf{Role models and score.}
Let $\mathcal A$ be a finite, nonempty collection of analysed matrices
$W_a=[w_{a,1},\ldots,w_{a,n_a}]\in\mathbb R^{d_a\times n_a}$,
with $n_a\ge1$ and $m_a:=\norm{W_a}_F^2>0$. Fix an integer $k\ge1$
and a labelling $c_{a,i}\in\{1,\ldots,k\}$. All cluster sums below
range over nonempty clusters. With
$C_{a,c}:=\{i:c_{a,i}=c\}$ and $n_{a,c}:=|C_{a,c}|$, define
\begin{equation}
 \bar w_{a,c}=\frac1{n_{a,c}}\sum_{i\in C_{a,c}}w_{a,i},
 \qquad
 \bar W_a[:,i]=\bar w_{a,c_{a,i}},
 \qquad \Delta_a=W_a-\bar W_a.
 \label{eq:role_centroid}
\end{equation}
Thus each $\bar W_a$ has at most $k$ distinct columns. Write
$W=(W_a)_a$, $\bar W=(\bar W_a)_a$, $\Delta=W-\bar W$, and use
$\langle U,V\rangle=\sum_a\langle U_a,V_a\rangle_F$ and
$\norm U^2=\sum_a\norm{U_a}_F^2$ for tuples of matrices. The scores
in Equations~\ref{eq:matrix_dvl_kmeans}--\ref{eq:model_dvl_kmeans} are
\begin{equation}
 S_a=\frac{\norm{\Delta_a}_F^2}{m_a},
 \qquad \bar S=\frac1{|\mathcal A|}\sum_a S_a,
 \qquad \mathcal E_{\mathrm{role}}:=\norm\Delta^2=\sum_a S_am_a.
 \label{eq:score_ratio}
\end{equation}

\noindent\textbf{Prediction loss and regularity.}
Let $z_W(x)\in\mathbb R^{|\mathcal V|}$ be the logits with all
unanalysed parameters held fixed, and let
$p_W(\cdot\mid x)=\operatorname{softmax}(z_W(x))$. Define
\begin{equation}
 \ell_y(z)=\LSE(z)-z_y,
 \qquad \LSE(z)=\log\sum_v e^{z_v},
 \qquad L(W)=\mathbb E_{(x,y)}\ell_y(z_W(x)).
 \label{eq:population_loss}
\end{equation}
Assume finite logits almost surely, finite $L(W)$ and $L(\bar W)$,
and differentiability at $W$, with differentiation commuting with the
expectation defining $L$. All expectations below are finite.
The labelling and the centroids are fixed at the trained $W$ when taking
directional derivatives. Set
\begin{equation}
 J_W(x)=D_Wz_W(x),\qquad
 \delta z(x)=z_{\bar W}(x)-z_W(x)=d(x)+r(x),\qquad
 d(x)=-J_W(x)\Delta.
 \label{eq:first_order_logit_change}
\end{equation}
Here $r$ is the exact remainder of the finite displacement. For
$\omega\ge0$, define
\begin{equation}
 \rho=\nabla_WL(W)+\omega W,\qquad
 \varepsilon=\norm\rho,\qquad
 \varepsilon_a=\norm{\rho_a}_F.
 \label{eq:gradient_residual_rearranged}
\end{equation}
Thus $\rho=0$ means stationarity of
$L(W)+\frac\omega2\norm W^2$ on the analysed parameters.
Write $DL(W)[V]=\langle\nabla_WL(W),V\rangle$ and define
\begin{align}
 \kappa_1&=\mathbb E\langle p_W-e_y,d\rangle,
 &R&=\mathbb E\langle p_W-e_y,r\rangle,
 \label{eq:kappa_definition}\\
 \bar D&=\mathbb E_x\mathrm{KL}(p_W(\cdot\mid x)\|p_{\bar W}(\cdot\mid x)),
 &\rho_R&=2\mathbb E_x\norm{r(x)}_\infty,
 \label{eq:D_definition}\\
 \Lambda_1&=\mathbb E_x\norm{J_W(x)}_{F\to\infty}. &&
 \label{eq:jacobian_sensitivity}
\end{align}
The operator norm uses the tuple norm above.

\subsection{Energy Removed by Role Collapse}

\begin{lemma}[Orthogonal decomposition induced by a labelling]
\label{lem:energy_split}
For every fixed labelling,
\begin{equation}
 \langle\bar W_a,\Delta_a\rangle_F=0,\qquad
 m_a=\norm{\bar W_a}_F^2+\norm{\Delta_a}_F^2,\qquad
 S_a=1-\frac{\norm{\bar W_a}_F^2}{m_a}\in[0,1].
 \label{eq:matrix_pythagorean}
\end{equation}
These identities imply
\begin{align}
 \langle W,\Delta\rangle&=\norm\Delta^2=\sum_a S_am_a\le\norm W^2,
 \label{eq:score_bridge}\\
 |\mathcal A|\bar S\min_a m_a&\le\norm\Delta^2
 \le |\mathcal A|\bar S\max_a m_a.
 \label{eq:average_score_bridge}
\end{align}
\end{lemma}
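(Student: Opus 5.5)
The whole lemma reduces to one column-block computation, after which everything else is bookkeeping. I will prove the orthogonality $\langle\bar W_a,\Delta_a\rangle_F=0$ by splitting the Frobenius inner product over columns and grouping the columns by cluster. Since $\bar W_a[:,i]=\bar w_{a,c}$ is constant on $C_{a,c}$,
\begin{equation*}
\langle\bar W_a,\Delta_a\rangle_F=\sum_{c}\Big\langle \bar w_{a,c},\sum_{i\in C_{a,c}}(w_{a,i}-\bar w_{a,c})\Big\rangle ,
\end{equation*}
where the sum runs only over nonempty clusters, as stipulated in the setting. By the definition of the centroid in Equation~\ref{eq:role_centroid}, the inner sum equals $\sum_{i\in C_{a,c}}w_{a,i}-n_{a,c}\bar w_{a,c}=0$. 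This is the only step with real content, and it is immediate. Note that it uses only that each centroid is the mean of its own cluster; it does not depend on how the labelling was produced, which is why the statement holds for every fixed labelling (in particular for the DV-Lens one).

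Next, I will expand $m_a=\norm{W_a}_F^2=\norm{\bar W_a+\Delta_a}_F^2$. The cross term vanishes, which gives the Pythagorean identity $m_a=\norm{\bar W_a}_F^2+\norm{\Delta_a}_F^2$. Dividing by $m_a>0$ gives $S_a=1-\norm{\bar W_a}_F^2/m_a$. Both summands are nonnegative and sum to $m_a$, so $0\le\norm{\Delta_a}_F^2\le m_a$, which places $S_a\in[0,1]$.

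For \eqref{eq:score_bridge}, I will write $\langle W_a,\Delta_a\rangle_F=\langle\bar W_a+\Delta_a,\Delta_a\rangle_F=\norm{\Delta_a}_F^2$ using the orthogonality, and then sum over $a$ with the tuple inner product. This gives $\langle W,\Delta\rangle=\norm\Delta^2$. The equality $\norm\Delta^2=\sum_aS_am_a$ follows from the definition $S_a=\norm{\Delta_a}_F^2/m_a$. The upper bound $\norm\Delta^2\le\norm W^2$ follows by summing $\norm{\Delta_a}_F^2\le m_a$ over $a$.

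Finally, for \eqref{eq:average_score_bridge}, I will use $\sum_aS_a=|\mathcal A|\bar S$ together with $S_a\ge0$. This yields the sandwich
\begin{equation*}
\min_b m_b\sum_aS_a\le\sum_aS_am_a\le\max_b m_b\sum_aS_a ,
\end{equation*}
and substituting $\sum_aS_a=|\mathcal A|\bar S$ gives the stated bounds. No step is a genuine obstacle. The only point needing care is restricting all cluster sums to nonempty clusters so that $\bar w_{a,c}$ is well defined, which the setting already stipulates.
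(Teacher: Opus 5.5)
Your proposal is correct and follows the paper's proof essentially step for step. Both arguments derive orthogonality from the vanishing of within-cluster residual sums, expand $\norm{\bar W_a+\Delta_a}_F^2$ to get the Pythagorean identity, sum $\langle\bar W_a+\Delta_a,\Delta_a\rangle_F$ over blocks, and close with the $\min_a m_a$ and $\max_a m_a$ sandwich on $\sum_a S_am_a$ (you also state explicitly that $S_a\ge0$ is what justifies this sandwich termwise, which the paper leaves implicit).
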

\begin{proof}
The residuals sum to zero in each cluster. Hence
\begin{align*}
 \langle\bar W_a,\Delta_a\rangle_F
 &=\sum_c\left\langle\bar w_{a,c},
       \sum_{i\in C_{a,c}}(w_{a,i}-\bar w_{a,c})\right\rangle=0,\\
 m_a&=\norm{\bar W_a+\Delta_a}_F^2
     =\norm{\bar W_a}_F^2+\norm{\Delta_a}_F^2,\\
 S_a&=\frac{\norm{\Delta_a}_F^2}{m_a}
     =1-\frac{\norm{\bar W_a}_F^2}{m_a}\in[0,1],\\
 \langle W,\Delta\rangle
 &=\sum_a\langle\bar W_a+\Delta_a,\Delta_a\rangle_F
   =\sum_a\norm{\Delta_a}_F^2=\sum_a S_am_a\le\norm W^2,\\
 \left(\min_a m_a\right)\sum_aS_a
 &\le\sum_a S_am_a\le\left(\max_a m_a\right)\sum_aS_a.
 \qedhere
\end{align*}
\end{proof}

\subsection{Cross-Entropy under a Logit Change}

\begin{lemma}[Softmax cross-entropy identity and bounds]
\label{lem:bregman}
For finite logits $z,z'$ and $p=\operatorname{softmax}(z)$,
$q=\operatorname{softmax}(z')$,
\begin{align}
 \ell_y(z')-\ell_y(z)&=\langle p-e_y,z'-z\rangle+\mathrm{KL}(p\|q),
 \label{eq:bregman}\\
 0\le\mathrm{KL}(p\|q)&\le2\norm{z'-z}_\infty,
 \qquad |\ell_y(z')-\ell_y(z)|\le2\norm{z'-z}_\infty.
 \label{eq:ce_lipschitz}
\end{align}
\end{lemma}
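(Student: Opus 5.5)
The identity follows from writing both losses through the log-sum-exp and recognising the KL divergence; the bounds then come from the log-sum-exp being 1-Lipschitz in the sup norm. Since $\ell_y(z)=\LSE(z)-z_y$, we have
\[
\ell_y(z')-\ell_y(z)=\LSE(z')-\LSE(z)-(z'_y-z_y).
\]
With $p_v=e^{z_v-\LSE(z)}$ and $q_v=e^{z'_v-\LSE(z')}$,
\[
\mathrm{KL}(p\|q)=\sum_v p_v\bigl[(z_v-z'_v)-\LSE(z)+\LSE(z')\bigr]=\LSE(z')-\LSE(z)-\langle p,z'-z\rangle,
\]
using $\sum_vp_v=1$. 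Substituting $\LSE(z')-\LSE(z)=\mathrm{KL}(p\|q)+\langle p,z'-z\rangle$ into the first display, and noting $z'_y-z_y=\langle e_y,z'-z\rangle$, gives \eqref{eq:bregman}. Equivalently, this is the statement that the Bregman divergence of $\LSE$ equals the KL divergence between the induced softmax distributions.

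For \eqref{eq:ce_lipschitz}, nonnegativity of KL is Gibbs' inequality. Put $\delta=z'-z$ and $M=\norm{\delta}_\infty$. Monotonicity of $\LSE$ and $\LSE(z+c\mathbf 1)=\LSE(z)+c$ give $|\LSE(z')-\LSE(z)|\le M$. Also $|\langle p,\delta\rangle|\le\norm p_1\norm\delta_\infty=M$. The expression for KL above then yields $\mathrm{KL}(p\|q)\le 2M$. For the loss difference, write it as $\bigl(\LSE(z')-\LSE(z)\bigr)-\delta_y$; each of the two terms is at most $M$ in absolute value, so the difference is at most $2M$. One could instead bound it by $|\langle p-e_y,\delta\rangle|+\mathrm{KL}$, but that only gives $4M$, so the direct route is the one to use.

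None of the steps is a real obstacle. The one point that needs care is using the $\LSE$ form directly for the cross-entropy Lipschitz bound, rather than going through \eqref{eq:bregman}, so that the constant stays at $2$. Finiteness of the logits guarantees that $p$ and $q$ have full support, so the KL divergence and all logarithms are finite.
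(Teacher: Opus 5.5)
Your proof is correct. The identity is derived exactly as in the paper: expand $\mathrm{KL}(p\|q)$ using $\log p_v-\log q_v=z_v-z'_v-\operatorname{LSE}(z)+\operatorname{LSE}(z')$ and substitute. The difference is in how you obtain the two inequalities.

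The paper bounds the KL term by symmetrizing:
$\mathrm{KL}(p\|q)\le\mathrm{KL}(p\|q)+\mathrm{KL}(q\|p)=\langle p-q,z-z'\rangle\le\|p-q\|_1\|h\|_\infty\le2\|h\|_\infty$.
It bounds the loss difference through the path integral $\ell_y(z')-\ell_y(z)=\int_0^1\langle\operatorname{softmax}(z+th)-e_y,h\rangle\,dt$, together with $\|\operatorname{softmax}(z+th)-e_y\|_1\le2$.

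You instead derive both bounds from a single fact: $\operatorname{LSE}$ is $1$-Lipschitz in the sup norm, by monotonicity plus $\operatorname{LSE}(z+c\mathbf 1)=\operatorname{LSE}(z)+c$. You combine this with $\|p\|_1=1$ and $|\delta_y|\le M$.

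What each route buys:
\begin{itemize}
\item Your argument is calculus-free and rests on one elementary property. Its one-sided version even gives the shift-invariant bound $\max_v\delta_v-\min_v\delta_v$ for both quantities.
\item The paper's argument keeps more information in its intermediate steps. Since $\|p-q\|_1\le2\|h\|_\infty$, the symmetrized bound shows that $\mathrm{KL}(p\|q)$ is actually second order in $h$, as expected of a Bregman remainder.
\item The paper's integral form also carries the factor $2\bigl(1-\operatorname{softmax}_y(z+th)\bigr)$ averaged along the segment. This is small when the prediction is confident in $y$.
\end{itemize}

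For the constant $2$ the lemma requires, either argument suffices. Your remark that routing the loss bound through the identity would only give $4M$ is accurate, and the paper avoids that route too.
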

\begin{proof}
Put $h=z'-z$. The softmax log-probabilities and nonnegativity of Kullback--Leibler (KL) divergence give
\begin{align*}
 \mathrm{KL}(p\|q)
 &=\sum_vp_v(z_v-z'_v)+\LSE(z')-\LSE(z),\\
 \ell_y(z')-\ell_y(z)
 &=\mathrm{KL}(p\|q)+\langle p,h\rangle-h_y
   =\mathrm{KL}(p\|q)+\langle p-e_y,h\rangle,\\
 0\le\mathrm{KL}(p\|q)
 &\le\mathrm{KL}(p\|q)+\mathrm{KL}(q\|p)
   =\langle p-q,z-z'\rangle
   \le2\norm h_\infty,\\
 \ell_y(z')-\ell_y(z)
 &=\int_0^1\langle\operatorname{softmax}(z+th)-e_y,h\rangle\,dt,\\
 |\ell_y(z')-\ell_y(z)|
 &\le\int_0^1\norm{\operatorname{softmax}(z+th)-e_y}_1
                  \norm h_\infty\,dt
   \le2\norm h_\infty.
 \qedhere
\end{align*}
\end{proof}

\subsection{Exact Loss Decomposition}

\begin{theorem}[Exact role-collapse decomposition]
\label{thm:collapse_decomposition}
Under the setting above, for every labelling and $\omega\ge0$,
\begin{align}
 L(\bar W)-L(W)&=\kappa_1+\bar D+R,
 &\kappa_1&=\omega\norm\Delta^2-\langle\rho,\Delta\rangle,
 \label{eq:exact_collapse_decomposition}\\
 |\kappa_1-\omega\norm\Delta^2|&\le\varepsilon\norm\Delta,
 &|R|&\le\rho_R.
 \label{eq:first_order_residual_bound}
\end{align}
At stationarity, $\kappa_1=\omega\sum_aS_am_a$.
\end{theorem}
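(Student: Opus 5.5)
The plan is to apply Lemma~\ref{lem:bregman} pointwise and then take expectations. For each $(x,y)$, set $z=z_W(x)$ and $z'=z_{\bar W}(x)$, so that $z'-z=\delta z(x)=d(x)+r(x)$ by \eqref{eq:first_order_logit_change}. Equation~\eqref{eq:bregman} then gives
\[
\ell_y(z_{\bar W}(x))-\ell_y(z_W(x))=\langle p_W-e_y,d(x)\rangle+\langle p_W-e_y,r(x)\rangle+\mathrm{KL}(p_W\|p_{\bar W}).
\]
Taking $\mathbb E_{(x,y)}$ is legitimate because $L(W)$ and $L(\bar W)$ are finite and all listed expectations are assumed finite. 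This yields $L(\bar W)-L(W)=\kappa_1+R+\bar D$ directly from the definitions \eqref{eq:kappa_definition}--\eqref{eq:D_definition}.

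Next I identify $\kappa_1$. The chain rule gives $\nabla_z\ell_y=p_W-e_y$. Hence $\langle p_W-e_y,-J_W(x)\Delta\rangle$ is the directional derivative of $\ell_y(z_W(x))$ along $-\Delta$. Interchanging differentiation and expectation, as assumed, gives $\kappa_1=DL(W)[-\Delta]=-\langle\nabla_WL(W),\Delta\rangle$. Substituting $\nabla_WL(W)=\rho-\omega W$ from \eqref{eq:gradient_residual_rearranged} gives $\kappa_1=\omega\langle W,\Delta\rangle-\langle\rho,\Delta\rangle$. Lemma~\ref{lem:energy_split}, eq.~\eqref{eq:score_bridge}, gives $\langle W,\Delta\rangle=\norm\Delta^2$, which produces the stated formula. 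Cauchy--Schwarz in the tuple inner product then gives $|\kappa_1-\omega\norm\Delta^2|=|\langle\rho,\Delta\rangle|\le\varepsilon\norm\Delta$. At stationarity $\rho=0$, so $\kappa_1=\omega\norm\Delta^2=\omega\sum_aS_am_a$, again by \eqref{eq:score_bridge}.

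For the remainder, Hölder gives $|\langle p_W-e_y,r\rangle|\le\norm{p_W-e_y}_1\norm r_\infty\le2\norm r_\infty$. Taking expectations yields $|R|\le\rho_R$.

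I expect the main obstacle to be the $\kappa_1$ identification. One must check that the labelling and centroids are held fixed, so that $\Delta$ is a constant direction. One must also check that the per-sample directional derivative matches $\langle p_W-e_y,d\rangle$ with the correct sign, and that the exchange of expectation and derivative is covered by the standing assumptions. Everything else is bookkeeping.
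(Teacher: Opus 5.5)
Your proposal is correct and follows essentially the same route as the paper. The paper likewise applies Lemma~\ref{lem:bregman} with $z'-z=d+r$ and takes expectations, then identifies $\kappa_1=-\langle\nabla_WL(W),\Delta\rangle$ through the assumed interchange of differentiation and expectation. It substitutes $\nabla_WL(W)=\rho-\omega W$ together with $\langle W,\Delta\rangle=\norm\Delta^2$ from Lemma~\ref{lem:energy_split}, and bounds $R$ by H\"older with $\norm{p_W-e_y}_1\le2$.
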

\begin{proof}
Using the chain rule under the stated expectation condition and
Lemma~\ref{lem:bregman},
\begin{align*}
 \nabla_WL(W)
 &=\mathbb E\big[J_W(x)^\top(p_W(x)-e_y)\big],\\
 L(\bar W)-L(W)
 &=\mathbb E\langle p_W-e_y,\delta z\rangle+\bar D\\
 &=\mathbb E\langle p_W-e_y,d\rangle
   +\mathbb E\langle p_W-e_y,r\rangle+\bar D
   =\kappa_1+R+\bar D,\\
 \kappa_1
 &=-\mathbb E\langle p_W-e_y,J_W\Delta\rangle
   =-\langle\nabla_WL(W),\Delta\rangle\\
 &=-\langle\rho-\omega W,\Delta\rangle
   =\omega\norm\Delta^2-\langle\rho,\Delta\rangle,\\
 |\kappa_1-\omega\norm\Delta^2|
 &=|\langle\rho,\Delta\rangle|\le\varepsilon\norm\Delta,\\
 |R|&\le\mathbb E\big[\norm{p_W-e_y}_1\norm r_\infty\big]
      \le2\mathbb E_x\norm r_\infty=\rho_R,\\
 \rho=0&\quad\Longrightarrow\quad
 \kappa_1=\omega\norm\Delta^2=\omega\sum_aS_am_a.
 \qedhere
\end{align*}
\end{proof}

\begin{corollary}[Local collapse cost at stationarity]
\label{cor:no_useless_spread}
If $\omega>0$, $\rho=0$, and $\Delta\ne0$, the unit collapse direction
$u=-\Delta/\norm\Delta$ satisfies
\begin{equation}
 DL(W)[u]=\omega\norm\Delta
 =\omega\left(\sum_aS_am_a\right)^{1/2}>0.
 \label{eq:collapse_directional_stationary}
\end{equation}
For all $\Delta$, the unnormalized derivative is
$DL(W)[-\Delta]=\omega\norm\Delta^2$, which vanishes exactly when
$\Delta=0$.
\end{corollary}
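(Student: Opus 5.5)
The corollary follows from the first-order part of Theorem~\ref{thm:collapse_decomposition} together with Lemma~\ref{lem:energy_split}; no new estimate is needed. The plan has three steps.

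\emph{Unnormalized derivative.} By linearity of the directional derivative, $DL(W)[-\Delta]=-\langle\nabla_WL(W),\Delta\rangle$. The proof of Theorem~\ref{thm:collapse_decomposition} established the identity $\kappa_1=-\langle\nabla_WL(W),\Delta\rangle=\omega\norm\Delta^2-\langle\rho,\Delta\rangle$. This identity uses only the definition of $\rho$ in Equation~\ref{eq:gradient_residual_rearranged}: substitute $\nabla_WL(W)=\rho-\omega W$ and use $\langle W,\Delta\rangle=\norm\Delta^2$ from Equation~\ref{eq:score_bridge}. I will restate this as
\[
DL(W)[-\Delta]=\omega\norm\Delta^2-\langle\rho,\Delta\rangle ,
\]
which holds for every labelling. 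At stationarity ($\rho=0$) it becomes $DL(W)[-\Delta]=\omega\norm\Delta^2$. I read the phrase ``for all $\Delta$'' in the statement as ranging over all labellings under the corollary's hypotheses $\rho=0$ and $\omega>0$. With $\omega>0$, the quantity $\omega\norm\Delta^2$ vanishes if and only if $\norm\Delta=0$, that is, if and only if $\Delta=0$.

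\emph{Unit direction.} If $\Delta\neq0$, I will divide by $\norm\Delta$ and use linearity again:
\[
DL(W)[u]=\frac{\omega\norm\Delta^2}{\norm\Delta}=\omega\norm\Delta .
\]
Equation~\ref{eq:score_bridge} gives $\norm\Delta^2=\sum_aS_am_a$, so taking square roots yields the score form $\omega\bigl(\sum_aS_am_a\bigr)^{1/2}$. This is strictly positive because $\omega>0$ and $\Delta\neq0$.

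\emph{Main caveat.} The only point that needs care is that the labelling and centroids are held fixed at $W$, as stated in the setting. This makes $-\Delta$ a fixed direction rather than a $W$-dependent map, so the directional derivative is simply the linear functional $\langle\nabla_WL(W),\cdot\rangle$. I will state this explicitly; beyond it the argument is a direct substitution.
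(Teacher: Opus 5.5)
Your proposal is correct and follows essentially the same route as the paper: both write $DL(W)[-\Delta]=-\langle\nabla_WL(W),\Delta\rangle=\omega\norm\Delta^2-\langle\rho,\Delta\rangle$ using Theorem~\ref{thm:collapse_decomposition} (that is, $\nabla_WL(W)=\rho-\omega W$ together with $\langle W,\Delta\rangle=\norm\Delta^2$), set $\rho=0$, and divide by $\norm\Delta$ by linearity. Your explicit note that the ``vanishes exactly when $\Delta=0$'' clause requires $\rho=0$ and $\omega>0$ states a hypothesis that the paper's one-line proof uses only implicitly.
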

\begin{proof}
By Theorem~\ref{thm:collapse_decomposition},
\begin{align*}
 DL(W)[-\Delta]&=-\langle\nabla_WL(W),\Delta\rangle
               =\kappa_1=\omega\norm\Delta^2,\\
 DL(W)[u]&=\frac{DL(W)[-\Delta]}{\norm\Delta}
          =\omega\norm\Delta\quad(\Delta\ne0),\\
 DL(W)[-\Delta]=0&\quad\Longleftrightarrow\quad\Delta=0.
 \qedhere
\end{align*}
\end{proof}

\subsection{The Score as a Ratio of First-Order Costs}
\label{sec:score_ratio}

\begin{proposition}[Matrix-level collapse-to-deletion ratio]
\label{prop:ratio_score}
Fix $a$ and $\omega>0$. Interpret $-\Delta_a$ and $-W_a$ as tuple
directions supported only on block $a$, and write
$N_a=DL(W)[-\Delta_a]$, $D_a=DL(W)[-W_a]$. These derivatives satisfy
\begin{equation}
 N_a=\omega\norm{\Delta_a}_F^2-\langle\rho_a,\Delta_a\rangle_F,
 \qquad D_a=\omega m_a-\langle\rho_a,W_a\rangle_F.
\end{equation}
If $\rho_a=0$, $N_a/D_a=S_a$; if $\rho=0$, consequently
\begin{equation}
 \bar S=\frac1{|\mathcal A|}\sum_a\frac{N_a}{D_a}.
 \label{eq:ratio_score}
\end{equation}
More generally, if $\varepsilon_a<\omega\norm{W_a}_F$, then $D_a>0$ and
\begin{equation}
 \left|\frac{N_a}{D_a}-S_a\right|
 \le\frac{2\varepsilon_a\sqrt{S_a}}{\omega\norm{W_a}_F-\varepsilon_a}.
 \label{eq:ratio_score_residual}
\end{equation}
\end{proposition}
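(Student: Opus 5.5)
We compute the two directional derivatives directly from $\rho=\nabla_WL(W)+\omega W$, using that the tuple directions are supported on block $a$. Then $DL(W)[V]=\langle\nabla_WL(W),V\rangle=\langle\rho_a-\omega W_a,V_a\rangle_F$. For $V=-\Delta_a$ this gives
\[
N_a=\omega\langle W_a,\Delta_a\rangle_F-\langle\rho_a,\Delta_a\rangle_F .
\]
By Lemma~\ref{lem:energy_split}, $\langle W_a,\Delta_a\rangle_F=\langle\bar W_a+\Delta_a,\Delta_a\rangle_F=\norm{\Delta_a}_F^2$. For $V=-W_a$ we get $D_a=\omega m_a-\langle\rho_a,W_a\rangle_F$ immediately. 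When $\rho_a=0$ and $\omega>0$, we have $D_a=\omega m_a>0$ and $N_a/D_a=\norm{\Delta_a}_F^2/m_a=S_a$ by Equation~\ref{eq:score_ratio}. If $\rho=0$ then every $\rho_a=0$, and averaging over $a$ gives Equation~\ref{eq:ratio_score}.

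For the perturbed bound, we first get positivity of the denominator from Cauchy--Schwarz. Since $|\langle\rho_a,W_a\rangle_F|\le\varepsilon_a\norm{W_a}_F$, we have $D_a\ge\norm{W_a}_F(\omega\norm{W_a}_F-\varepsilon_a)>0$. Next, put $x=\langle\rho_a,\Delta_a\rangle_F$ and $y=\langle\rho_a,W_a\rangle_F$. Writing $N_a=S_a\,\omega m_a-x$ and $D_a=\omega m_a-y$, we get
\[
\frac{N_a}{D_a}-S_a=\frac{S_a\,y-x}{D_a}.
\]
We bound the numerator with Cauchy--Schwarz and $\norm{\Delta_a}_F=\sqrt{S_a}\norm{W_a}_F$:
\[
|S_ay-x|\le\varepsilon_a\norm{W_a}_F(S_a+\sqrt{S_a}).
\]
Because $S_a\in[0,1]$ by Lemma~\ref{lem:energy_split}, $S_a\le\sqrt{S_a}$, so the numerator is at most $2\varepsilon_a\sqrt{S_a}\norm{W_a}_F$. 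Dividing by the lower bound on $D_a$ cancels one factor of $\norm{W_a}_F$ and yields Equation~\ref{eq:ratio_score_residual}.

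The only step that needs care is arranging the numerator as $S_ay-x$. This makes both terms scale with $\sqrt{S_a}$, and the bound $S_a\le\sqrt{S_a}$ then merges them into the stated factor of $2$. Everything else is bookkeeping with the orthogonal decomposition from Lemma~\ref{lem:energy_split}.
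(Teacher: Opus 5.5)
Your proposal is correct and follows the paper's proof essentially step for step: the orthogonality identity $\langle W_a,\Delta_a\rangle_F=\norm{\Delta_a}_F^2$ for $N_a$, the Cauchy--Schwarz lower bound $D_a\ge\norm{W_a}_F(\omega\norm{W_a}_F-\varepsilon_a)$, the numerator rearrangement $S_a\langle\rho_a,W_a\rangle_F-\langle\rho_a,\Delta_a\rangle_F$, and the merge $S_a+\sqrt{S_a}\le 2\sqrt{S_a}$ using $S_a\in[0,1]$. You also state explicitly that $D_a=\omega m_a>0$ in the stationary case, which the paper leaves implicit.
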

\begin{proof}
All products below are on block $a$. For the residual bound, assume
$\varepsilon_a<\omega\norm{W_a}_F$. Orthogonality gives
\begin{align*}
 N_a&=-\langle\rho_a-\omega W_a,\Delta_a\rangle_F
     =\omega\norm{\Delta_a}_F^2-\langle\rho_a,\Delta_a\rangle_F,\\
 D_a&=-\langle\rho_a-\omega W_a,W_a\rangle_F
     =\omega m_a-\langle\rho_a,W_a\rangle_F,\\
 \rho_a=0&\quad\Longrightarrow\quad
 \frac{N_a}{D_a}=\frac{\omega\norm{\Delta_a}_F^2}{\omega m_a}=S_a,\\
 D_a&\ge\norm{W_a}_F(\omega\norm{W_a}_F-\varepsilon_a)>0,\\
 \frac{N_a}{D_a}-S_a
 &=\frac{S_a\langle\rho_a,W_a\rangle_F-
               \langle\rho_a,\Delta_a\rangle_F}{D_a},\\
 \left|\frac{N_a}{D_a}-S_a\right|
 &\le\frac{\varepsilon_a(S_a\norm{W_a}_F+\norm{\Delta_a}_F)}{D_a}\\
 &\le\frac{\varepsilon_a(S_a+\sqrt{S_a})}
             {\omega\norm{W_a}_F-\varepsilon_a}
  \le\frac{2\varepsilon_a\sqrt{S_a}}
             {\omega\norm{W_a}_F-\varepsilon_a}.
 \qedhere
\end{align*}
\end{proof}

\subsection{Finite Role-Collapse Cost}

\begin{theorem}[Score certificate for role-collapse cost]
\label{thm:certificate}
For every fixed labelling and $\omega\ge0$,
\begin{align}
 L(\bar W)-L(W)&\ge\omega\sum_aS_am_a+\bar D
                        -\varepsilon\norm\Delta-\rho_R,
 \label{eq:collapse_cost_lower_bound}\\
 L(\bar W)-L(W)&\ge\omega|\mathcal A|\bar S\min_a m_a+\bar D
                        -\varepsilon\norm\Delta-\rho_R.
 \label{eq:mean_score_certificate}
\end{align}
A strictly positive loss increase is certified when
$\varepsilon\norm\Delta+\rho_R<\omega\sum_aS_am_a+\bar D$.
\end{theorem}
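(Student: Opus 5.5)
The plan is to read the certificate directly off the exact decomposition in Theorem~\ref{thm:collapse_decomposition}, and then convert the energy term into score form using Lemma~\ref{lem:energy_split}.

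First, I will start from the identity
\[
L(\bar W)-L(W)=\kappa_1+\bar D+R,
\]
which holds for every fixed labelling and every $\omega\ge0$. I bound the two signed terms from below.

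\begin{itemize}
\item For $\kappa_1$, the theorem gives $|\kappa_1-\omega\norm\Delta^2|\le\varepsilon\norm\Delta$. Hence $\kappa_1\ge\omega\norm\Delta^2-\varepsilon\norm\Delta$.
\item For the remainder, the theorem gives $|R|\le\rho_R$. Hence $R\ge-\rho_R$.
\end{itemize}

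The divergence term $\bar D$ is kept exactly; it is nonnegative by Lemma~\ref{lem:bregman}, although we do not even need that. Combining these gives
\[
L(\bar W)-L(W)\ge\omega\norm\Delta^2+\bar D-\varepsilon\norm\Delta-\rho_R.
\]

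Second, I will substitute the energy identity $\norm\Delta^2=\sum_aS_am_a$ from equation~\ref{eq:score_bridge}. This yields equation~\ref{eq:collapse_cost_lower_bound}.

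For equation~\ref{eq:mean_score_certificate}, I will use the left inequality of equation~\ref{eq:average_score_bridge}, namely $\sum_aS_am_a\ge|\mathcal A|\bar S\min_a m_a$. Since $\omega\ge0$, multiplying by $\omega$ preserves the inequality, so the bound weakens in the required direction.

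Finally, the certification clause is immediate. If $\varepsilon\norm\Delta+\rho_R<\omega\sum_aS_am_a+\bar D$, then the right side of equation~\ref{eq:collapse_cost_lower_bound} is strictly positive, so $L(\bar W)-L(W)>0$.

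There is no genuine obstacle here. The one point to check is that all quantities are finite, so that the expectations split additively. This is guaranteed by the standing assumptions: $L(W)$ and $L(\bar W)$ are finite, and all expectations below are finite. It is exactly what was already used in Theorem~\ref{thm:collapse_decomposition}. I would also remark that when $\Delta=0$ both sides vanish consistently.
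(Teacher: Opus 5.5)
Your proposal is correct and follows essentially the same route as the paper. You start from the exact decomposition of Theorem~\ref{thm:collapse_decomposition}, lower-bound the $\kappa_1$ deviation by $-\varepsilon\norm\Delta$ and $R$ by $-\rho_R$, substitute $\norm\Delta^2=\sum_aS_am_a$, and weaken via the left inequality of equation~\ref{eq:average_score_bridge} using $\omega\ge0$, with the certification clause following immediately.
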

\begin{proof}
The exact decomposition and the residual bounds give
\begin{align*}
 L(\bar W)-L(W)
 &=\omega\norm\Delta^2-\langle\rho,\Delta\rangle+\bar D+R\\
 &\ge\omega\norm\Delta^2+\bar D
       -\varepsilon\norm\Delta-\rho_R\\
 &=\omega\sum_aS_am_a+\bar D-\varepsilon\norm\Delta-\rho_R\\
 &\ge\omega|\mathcal A|\bar S\min_a m_a+\bar D
       -\varepsilon\norm\Delta-\rho_R.
 \qedhere
\end{align*}
\end{proof}

\begin{corollary}[Stationarity with zero logit remainder]
\label{cor:stationary_affine}
If $\rho=0$ and $r(x)=0$ almost surely, then
\begin{equation}
 L(\bar W)-L(W)=\omega\sum_aS_am_a+\bar D
 \ge\omega|\mathcal A|\bar S\min_a m_a.
 \label{eq:stationary_affine_exact}
\end{equation}
\end{corollary}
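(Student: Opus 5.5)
The plan is to specialise Theorem~\ref{thm:collapse_decomposition} directly rather than go through the certificate of Theorem~\ref{thm:certificate}. That way the inequalities collapse to an equality.

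The theorem gives the exact identity
\[
L(\bar W)-L(W)=\kappa_1+\bar D+R,
\qquad
\kappa_1=\omega\norm\Delta^2-\langle\rho,\Delta\rangle,
\]
valid for every labelling and every $\omega\ge0$.

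\textbf{First step.} Use $\rho=0$. This kills the term $\langle\rho,\Delta\rangle$, so $\kappa_1=\omega\norm\Delta^2$. By Lemma~\ref{lem:energy_split}, equation~\ref{eq:score_bridge}, this equals $\omega\sum_aS_am_a$. This is exactly the stationarity clause already recorded at the end of Theorem~\ref{thm:collapse_decomposition}.

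\textbf{Second step.} Use $r(x)=0$ almost surely. Then the integrand $\langle p_W-e_y,r\rangle$ in the definition of $R$ vanishes almost surely, hence $R=0$. This needs no appeal to the bound $|R|\le\rho_R$, since the expectation of an a.s.\ zero quantity is zero; integrability is covered by the standing finiteness assumptions. Substituting both facts gives the stated equality
\[
L(\bar W)-L(W)=\omega\sum_aS_am_a+\bar D.
\]

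\textbf{Third step.} For the inequality, drop $\bar D$, which is nonnegative because it is an expectation of KL divergences (Lemma~\ref{lem:bregman}, equation~\ref{eq:ce_lipschitz}). Then apply the lower half of equation~\ref{eq:average_score_bridge}, namely $\sum_aS_am_a\ge|\mathcal A|\bar S\min_am_a$, and multiply by $\omega\ge0$, which preserves the direction of the inequality.

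There is no real obstacle here. The only points needing care are confirming that $R=0$ follows from $r=0$ a.s.\ rather than only from $\rho_R=0$ (both hold), and keeping track of the sign $\omega\ge0$ when scaling the inequality.
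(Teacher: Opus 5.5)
Your proposal is correct and follows the paper's proof: substitute $\rho=0$ and $R=0$ (which holds because $r=0$ almost surely) into the exact decomposition of Theorem~\ref{thm:collapse_decomposition}, identify $\omega\norm\Delta^2$ with $\omega\sum_aS_am_a$ via Lemma~\ref{lem:energy_split}, and then drop $\bar D\ge0$. The final step uses the lower half of equation~\ref{eq:average_score_bridge} scaled by $\omega\ge0$, as the paper does.
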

\begin{proof}
Substituting $R=0$ and $\rho=0$ into the exact decomposition yields
\begin{align*}
 L(\bar W)-L(W)&=\omega\norm\Delta^2+\bar D
               =\omega\sum_aS_am_a+\bar D\\
               &\ge\omega\sum_aS_am_a
                \ge\omega|\mathcal A|\bar S\min_a m_a.
 \qedhere
\end{align*}
\end{proof}

\subsection{Loss Advantage Requires Role-External Energy}

With unanalysed parameters fixed, let
\begin{equation}
 L_k^\star=\inf_{W':\;\#\operatorname{cols}_{\mathrm{distinct}}(W'_a)\le k\;\forall a}L(W').
 \label{eq:k_role_optimum}
\end{equation}
\begin{theorem}[Necessary energy for a loss advantage]
\label{thm:necessity}
Fix a labelling. If $L(W)\le L_k^\star-\delta$ with
$\delta>\rho_R$, then $\Lambda_1>0$ and
\begin{align}
 \norm\Delta=\left(\sum_aS_am_a\right)^{1/2}
 &\ge\frac{\delta-\rho_R}{2\Lambda_1},
 \label{eq:necessity}\\
 \bar S&\ge\frac{(\delta-\rho_R)^2}
                   {4|\mathcal A|\max_a m_a\,\Lambda_1^2}.
 \label{eq:mean_score_necessity}
\end{align}
\end{theorem}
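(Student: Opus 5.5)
The plan is to compare $W$ with the collapsed tuple $\bar W$. By construction each $\bar W_a$ has at most $k$ distinct columns, so $\bar W$ is feasible for the infimum defining $L_k^\star$. Feasibility gives $L(\bar W)\ge L_k^\star$, and the hypothesis then yields
\[
L(\bar W)-L(W)\ge L_k^\star-L(W)\ge\delta.
\]
Everything reduces to upper bounding $L(\bar W)-L(W)$ by a quantity linear in $\norm\Delta$.

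For the upper bound I would use the Lipschitz part of Lemma~\ref{lem:bregman} pointwise. Pointwise it gives $|\ell_y(z_{\bar W}(x))-\ell_y(z_W(x))|\le2\norm{\delta z(x)}_\infty$. Splitting $\delta z=d+r$ with $d=-J_W\Delta$, as in Equation~\ref{eq:first_order_logit_change}, gives
\[
\norm{\delta z(x)}_\infty\le\norm{J_W(x)}_{F\to\infty}\norm\Delta+\norm{r(x)}_\infty .
\]
Taking expectations then yields
\[
L(\bar W)-L(W)\le2\Lambda_1\norm\Delta+\rho_R .
\]
This uses finiteness of $L(W)$ and $L(\bar W)$ to justify integrating the pointwise bound.

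Combining the two displays gives $2\Lambda_1\norm\Delta\ge\delta-\rho_R>0$. This forces both $\Lambda_1>0$ and $\norm\Delta>0$, and dividing by $2\Lambda_1$ gives Equation~\ref{eq:necessity}. The identity $\norm\Delta^2=\sum_aS_am_a$ comes from Lemma~\ref{lem:energy_split}. For Equation~\ref{eq:mean_score_necessity}, I would square the bound and apply the upper half of Equation~\ref{eq:average_score_bridge}, namely $\norm\Delta^2\le|\mathcal A|\bar S\max_am_a$, then rearrange.

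The main obstacle is a technical one in the Lipschitz step. I must make sure the operator-norm bound $\norm{J_W\Delta}_\infty\le\norm{J_W}_{F\to\infty}\norm\Delta$ matches the tuple norm used in defining $\Lambda_1$; it does by definition. I must also confirm that $\rho_R$ is exactly $2\mathbb E\norm r_\infty$, which it is. As a cross-check, the same bound can be obtained from Theorem~\ref{thm:collapse_decomposition} using $|\kappa_1|\le\mathbb E\norm{p_W-e_y}_1\norm{d}_\infty$ together with the KL bound. That route, however, would double the constant, so I would use the direct Lipschitz argument.
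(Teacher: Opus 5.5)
Your proposal is correct and matches the paper's proof step for step. Feasibility of $\bar W$ gives $L(\bar W)-L(W)\ge\delta$, and the Lipschitz bound of Lemma~\ref{lem:bregman} with the split $\delta z=d+r$ gives $L(\bar W)-L(W)\le2\Lambda_1\norm\Delta+\rho_R$; the mean-score bound then follows by squaring and applying the upper half of Equation~\ref{eq:average_score_bridge}, and your remark that the route through Theorem~\ref{thm:collapse_decomposition} would double the constant is accurate.
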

\begin{proof}
Feasibility of the role model and Lemma~\ref{lem:bregman} give
\begin{align*}
 L(\bar W)&\ge L_k^\star\ge L(W)+\delta,\\
 \delta&\le L(\bar W)-L(W)
       \le\mathbb E\big|\ell_y(z_{\bar W})-\ell_y(z_W)\big|\\
       &\le2\mathbb E_x\norm{\delta z(x)}_\infty\\
       &\le2\mathbb E_x\norm{J_W(x)\Delta}_\infty
           +2\mathbb E_x\norm{r(x)}_\infty\\
       &\le2\Lambda_1\norm\Delta+\rho_R,\\
 0<\delta-\rho_R&\le2\Lambda_1\norm\Delta
       \quad\Longrightarrow\quad
       \Lambda_1>0,\quad\norm\Delta\ge\frac{\delta-\rho_R}{2\Lambda_1},\\
 |\mathcal A|\bar S\max_a m_a
       &\ge\norm\Delta^2\ge\frac{(\delta-\rho_R)^2}{4\Lambda_1^2}.
 \qedhere
\end{align*}
\end{proof}

\subsection{What the \dvl{} Labelling Determines}
\label{sec:score_labelling}

The following propositions characterize the normalized-readout clustering
objective and the role model's readouts.

\begin{proposition}[Normalized-readout clustering objective]
\label{prop:vocab_optimal}
Fix the trained readout map $A_a=\WU\bar J_a$ and set
$r_{a,i}=A_aw_{a,i}$ and
$\widetilde r_{a,i}=r_{a,i}/(\norm{r_{a,i}}_2+\eta)$, where $\eta>0$
is the numerical stabilizer denoted by $\varepsilon$ in the main text.
For any labelling $c$, let $\widetilde\mu_{a,c}$ be the mean of the
normalized readouts in that cluster and define
\begin{equation}
 J_a(c)=\sum_i\norm{\widetilde r_{a,i}-\widetilde\mu_{a,c_i}}_2^2.
 \label{eq:readout_objective}
\end{equation}
An ideal global minimizer $c_a^\star$ satisfies
\begin{equation}
 J_a(c_a^\star)=\min_{c:\;\text{at most }k\text{ labels}}J_a(c)
 \le J_a(c_a^{\mathrm{DVL}}).
 \label{eq:vocab_optimal}
\end{equation}
Lloyd assignment and centroid updates each weakly decrease the
normalized-readout objective.
\end{proposition}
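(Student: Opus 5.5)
The proof has two parts. The first is the existence of a global minimizer with the stated inequality. The second is the monotonicity of the two Lloyd steps.

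\textbf{Existence and the inequality.} The index set is finite: labellings $c:\{1,\ldots,n_a\}\to\{1,\ldots,k\}$ number at most $k^{n_a}$. The normalized readouts $\widetilde r_{a,i}$ are well defined for every column, including columns with $r_{a,i}=0$, because $\eta>0$ keeps the denominator positive. Hence $J_a(c)$ is a finite real number for each labelling, where the cluster means $\widetilde\mu_{a,c}$ are taken over nonempty clusters only. A finite set of reals attains its minimum, so some $c_a^\star$ exists. The \dvl{} labelling $c_a^{\mathrm{DVL}}$ from Equation~\ref{eq:dvl_assignment} uses at most $k$ labels and is therefore feasible, which gives $J_a(c_a^\star)\le J_a(c_a^{\mathrm{DVL}})$ immediately.

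\textbf{Monotonicity of the Lloyd steps.} I would introduce the auxiliary objective
\[
 G(c,\mu)=\sum_i\norm{\widetilde r_{a,i}-\mu_{c_i}}_2^2
\]
for arbitrary centroids $\mu=(\mu_1,\ldots,\mu_k)$.

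\emph{Assignment step.} Fix $\mu$ and reassign each $i$ to $\arg\min_c\norm{\widetilde r_{a,i}-\mu_c}_2^2$. Each summand weakly decreases, so $G(c',\mu)\le G(c,\mu)$.

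\emph{Centroid update.} Fix $c$ and use, for every nonempty cluster $C$ and every vector $v$, the bias--variance identity
\[
 \sum_{i\in C}\norm{\widetilde r_{a,i}-v}^2=\sum_{i\in C}\norm{\widetilde r_{a,i}-\widetilde\mu_C}^2+|C|\norm{\widetilde\mu_C-v}^2 .
\]
This is the same zero-sum-residual argument as in Lemma~\ref{lem:energy_split}, now applied in readout space. It shows that replacing $\mu$ by the cluster means weakly decreases $G$, and that $G(c,\widetilde\mu(c))=J_a(c)$.

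Chaining the two steps gives
\[
 J_a(c)=G(c,\widetilde\mu(c))\ge G(c',\widetilde\mu(c))\ge G(c',\widetilde\mu(c'))=J_a(c').
\]
So one full Lloyd iteration weakly decreases $J_a$, and each half-step weakly decreases $G$.

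\textbf{Main obstacle: empty clusters.} This is the only delicate point. If a reassignment empties a cluster, its stale centroid simply drops out of the sum. The remaining clusters still satisfy the identity above, so the inequality is unaffected. The label count never exceeds $k$, so feasibility is preserved throughout.

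Finally, I would note that the proposition claims only weak decrease and global optimality of an ideal minimizer. It does not claim that Lloyd's algorithm reaches $c_a^\star$, so no convergence argument is required.
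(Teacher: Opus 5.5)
Your proof is correct and follows essentially the same route as the paper. The paper also introduces the auxiliary objective $F(c,\mu)=\sum_i\|\widetilde r_i-\mu_{c_i}\|_2^2$ and uses the centroid identity $F(c,\mu)=J(c)+\sum_{j:n_j>0}n_j\|\widetilde\mu_j-\mu_j\|_2^2$ together with nearest-centre assignment to chain $J(c^{t+1})=F(c^{t+1},\mu^{t+1})\le F(c^{t+1},\mu^t)\le F(c^t,\mu^t)=J(c^t)$. Your finiteness argument for the existence of $c_a^\star$ and your treatment of empty clusters are details the paper leaves implicit.
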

\begin{proof}
Suppress $a$, and put
$F(c,\mu)=\sum_i\norm{\widetilde r_i-\mu_{c_i}}_2^2$.
With $\mu^t$ the current cluster means, the centroid identity and
nearest-centre assignment imply
\begin{align*}
 F(c,\mu)&=J(c)+\sum_{j:n_j>0}n_j\norm{\widetilde\mu_j-\mu_j}_2^2,
 \qquad J(c)=\min_\mu F(c,\mu),\\
 \min_c J(c)&=\min_\mu\sum_i\min_j\norm{\widetilde r_i-\mu_j}_2^2,\\
 F(c^{t+1},\mu^t)&\le F(c^t,\mu^t),\\
 J(c^{t+1})=F(c^{t+1},\mu^{t+1})
 &\le F(c^{t+1},\mu^t)\le F(c^t,\mu^t)=J(c^t).
 \qedhere
\end{align*}
\end{proof}

\begin{proposition}[Readouts under a fixed trained map]
\label{prop:readout_preserved}
For any labelling and the fixed linear map $A_a$, the role centroid
reproduces the cluster mean of the unnormalized readouts:
$A_a\bar w_{a,c}=\bar r_{a,c}:=n_{a,c}^{-1}\sum_{i\in C_{a,c}}r_{a,i}$.
The removed readout matrix is $A_a\Delta_a$ and its squared norm is
the within-cluster readout dispersion. If $A_a$ has full column rank,
\begin{equation}
 S_a\le\frac{\sum_i\norm{r_{a,i}-\bar r_{a,c_{a,i}}}_2^2}
                  {\sigma_{\min}(A_a)^2m_a}.
 \label{eq:readout_bound}
\end{equation}
\end{proposition}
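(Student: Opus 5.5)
The statement follows from linearity of $A_a$ together with the lower singular-value bound for a full-column-rank map. I will take the three claims in order.

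\emph{Centroid readouts.} Since $A_a$ is linear and $\bar w_{a,c}$ is the cluster average from Equation~\ref{eq:role_centroid},
\[
A_a\bar w_{a,c}=n_{a,c}^{-1}\sum_{i\in C_{a,c}}A_aw_{a,i}=n_{a,c}^{-1}\sum_{i\in C_{a,c}}r_{a,i}=\bar r_{a,c}.
\]
This holds for any labelling, because linearity does not depend on how the clusters were chosen.

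\emph{Removed readout matrix.} Apply $A_a$ columnwise to $\Delta_a=W_a-\bar W_a$. Column $i$ becomes $A_aw_{a,i}-A_a\bar w_{a,c_{a,i}}=r_{a,i}-\bar r_{a,c_{a,i}}$, using the first claim. Hence $A_a\Delta_a$ is the matrix of readout deviations from their unnormalized cluster means, and
\[
\norm{A_a\Delta_a}_F^2=\sum_i\norm{r_{a,i}-\bar r_{a,c_{a,i}}}_2^2,
\]
which is the within-cluster readout dispersion. Two cautions apply. The dispersion is measured with unnormalized readouts and the means $\bar r$, not with the normalized $\widetilde r$ used to form the labelling. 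No optimality of that labelling is needed at any point.

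\emph{The bound.} Full column rank of $A_a$ means $\sigma_{\min}(A_a)>0$ and $\norm{A_av}_2\ge\sigma_{\min}(A_a)\norm v_2$ for every $v$; this is the Rayleigh-quotient characterization via $A_a^\top A_a\succeq\sigma_{\min}^2 I$. Applying it to each column of $\Delta_a$ and summing gives
\[
\norm{A_a\Delta_a}_F^2\ge\sigma_{\min}(A_a)^2\norm{\Delta_a}_F^2.
\]
Divide by $\sigma_{\min}(A_a)^2m_a$, which is positive because $m_a>0$ by the setting, and use $S_a=\norm{\Delta_a}_F^2/m_a$ from Equation~\ref{eq:score_ratio}. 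This yields Equation~\ref{eq:readout_bound}.

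The only step needing care is the hypothesis itself. $A_a=\WU\bar J_a$ maps $\mathbb R^{d_a}$ into $\mathbb R^{|\mathcal V|}$, so full column rank requires $\bar J_a$ to be injective on the relevant space. That is a genuine assumption, not something to prove. Without it, $\sigma_{\min}$ is zero and the bound is vacuous, or undefined if read as division by zero.
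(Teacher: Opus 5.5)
Your proposal is correct and follows the paper's proof essentially step for step. Linearity of $A_a$ gives $A_a\bar w_{a,c}=\bar r_{a,c}$ and identifies the columns of $A_a\Delta_a$ with the readout deviations, and the columnwise bound $\norm{A_av}_2\ge\sigma_{\min}(A_a)\norm{v}_2$, summed over columns and divided by $\sigma_{\min}(A_a)^2m_a>0$, yields the stated inequality; your note that full column rank of $A_a=\WU\bar J_a$ is a substantive hypothesis (requiring in particular that $\bar J_a$ be injective) is a fair caveat that the paper leaves implicit.
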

\begin{proof}
Linearity and the smallest singular value give
\begin{align*}
 A_a\bar w_{a,c}
 &=\frac1{n_{a,c}}\sum_{i\in C_{a,c}}A_aw_{a,i}=\bar r_{a,c},\\
 A_a\bar W_a-A_aW_a&=-A_a\Delta_a,\\
 \norm{A_a\Delta_a}_F^2
 &=\sum_i\norm{r_{a,i}-\bar r_{a,c_{a,i}}}_2^2\\
 &\ge\sigma_{\min}(A_a)^2\sum_i\norm{\Delta_{a,i}}_2^2
  =\sigma_{\min}(A_a)^2m_aS_a.
 \qedhere
\end{align*}
\end{proof}

\begin{proposition}[Score bounds and permutation reference]
\label{prop:sandwich}
For a matrix with $n=n_a$ columns and $m=m_a>0$, define
\begin{equation}
 \begin{gathered}
 \bar w^{(1)}=\frac1n\sum_iw_i,\qquad
 m^{\mathrm c}=\sum_i\norm{w_i-\bar w^{(1)}}_2^2,\qquad
 S^{(1)}=m^{\mathrm c}/m,\\
 S^{\mathrm{opt}}(k)=\min_{c:\;\text{at most }k\text{ labels}}S_a(c).
 \end{gathered}
 \label{eq:one_cluster_score}
\end{equation}
For any labelling with $k_c\le k$ nonempty clusters,
$S^{\mathrm{opt}}(k)\le S_a(c)\le S^{(1)}$.
A raw-weight $k$-means run satisfies
$S^{\mathrm{opt}}(k)\le S^{\mathrm{run}}$.
For $n>1$ and a uniform random permutation $\pi$ of the labels,
\begin{equation}
 \mathbb E_\pi S_a(c\circ\pi)=\frac{n-k_c}{n-1}S^{(1)}.
 \label{eq:permutation_null}
\end{equation}
For $n=1$, all these scores are zero.
\end{proposition}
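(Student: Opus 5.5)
The proof has three parts, all at the level of a single matrix with $S_a(c)=\norm{\Delta_a(c)}_F^2/m$. By Lemma~\ref{lem:energy_split}, $\norm{\Delta_a(c)}_F^2$ is the within-cluster sum of squared deviations of the columns from their cluster means.

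\textbf{Sandwich.} The lower bound $S^{\mathrm{opt}}(k)\le S_a(c)$ holds by definition, since $c$ uses at most $k$ labels. The same reasoning covers $S^{\mathrm{opt}}(k)\le S^{\mathrm{run}}$: a raw-weight $k$-means run outputs some labelling with at most $k$ labels, and $S^{\mathrm{run}}$ is evaluated through the same centroid formula.

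For the upper bound I use the variance decomposition. Splitting each column's deviation from $\bar w^{(1)}$ into its deviation from the cluster mean plus the cluster mean's deviation from $\bar w^{(1)}$, the cross term vanishes because residuals sum to zero in each cluster (the argument used in Lemma~\ref{lem:energy_split}). This gives
\[
m^{\mathrm c}=\norm{\Delta_a(c)}_F^2+\sum_j n_j\norm{\bar w_j-\bar w^{(1)}}_2^2\ge\norm{\Delta_a(c)}_F^2 ,
\]
and dividing by $m$ yields $S_a(c)\le S^{(1)}$.

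\textbf{Permutation expectation.} I read $c\circ\pi$ as permuting the column indices, so the cluster sizes $n_1,\dots,n_{k_c}$ are preserved and each cluster becomes a uniformly random subset of its size. Write $v_i=w_i-\bar w^{(1)}$, so that $\sum_i v_i=0$ and $\sum_i\norm{v_i}^2=m^{\mathrm c}$. For a cluster $C$ of size $n_j$, the within-cluster SSE equals $\sum_{i\in C}\norm{v_i}^2-\frac1{n_j}\norm{\sum_{i\in C}v_i}^2$.

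Under a uniform random subset:
\begin{itemize}
\item $\mathbb E\sum_{i\in C}\norm{v_i}^2=\frac{n_j}{n}m^{\mathrm c}$.
\item $\mathbb E\norm{\sum_{i\in C}v_i}^2=\frac{n_j}{n}m^{\mathrm c}+\frac{n_j(n_j-1)}{n(n-1)}\sum_{i\ne i'}\langle v_i,v_{i'}\rangle$.
\end{itemize}
The off-diagonal sum is $\sum_{i\ne i'}\langle v_i,v_{i'}\rangle=\norm{\sum_i v_i}^2-m^{\mathrm c}=-m^{\mathrm c}$. Substituting, cluster $j$ contributes
\[
m^{\mathrm c}\Big(\frac{n_j}{n}-\frac1n+\frac{n_j-1}{n(n-1)}\Big)=m^{\mathrm c}\,\frac{n_j-1}{n-1}.
\]
Summing over $j$ with $\sum_j n_j=n$ gives $m^{\mathrm c}(n-k_c)/(n-1)$. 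Dividing by $m$ gives Equation~\ref{eq:permutation_null}.

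\textbf{Case $n=1$.} There is a single column, which equals its own centroid, so every residual and $m^{\mathrm c}$ vanish and all the scores are zero.

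The main obstacle is the covariance bookkeeping in sampling without replacement. The key is to use $\sum_i v_i=0$ to eliminate the off-diagonal term, and to state explicitly that the permutation acts on indices so that cluster sizes are preserved. That convention matches the matched-size random-label control in Appendix~\ref{app:capability-controls}.
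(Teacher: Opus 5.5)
Your proposal is correct and follows essentially the same route as the paper: center the columns as $v_i=w_i-\bar w^{(1)}$, use $\sum_i v_i=0$ to turn the off-diagonal sum into $-m^{\mathrm c}$, and apply the without-replacement inclusion probabilities $n_j/n$ and $n_j(n_j-1)/(n(n-1))$. The differences are cosmetic. You sum per-cluster terms $\sum_{i\in C}\norm{v_i}_2^2-\frac1{n_j}\norm{\sum_{i\in C}v_i}_2^2$, whereas the paper writes $m^{\mathrm c}-\sum_j\frac1{n_j}\norm{\sum_{i\in C_j}v_i}_2^2$ directly. You also spell out the variance decomposition behind $S_a(c)\le S^{(1)}$, which the paper only asserts.
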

\begin{proof}
Write $\mathrm{SSE}(c)=mS_a(c)$ and $v_i=w_i-\bar w^{(1)}$, so that
$\sum_iv_i=0$. For a permuted cluster $C_j^\pi$ of fixed size $n_j$,
each column is included with probability $n_j/n$ and each distinct
pair with probability $n_j(n_j-1)/(n(n-1))$. Therefore, for $n>1$,
\begin{align*}
 mS^{\mathrm{opt}}(k)&\le\mathrm{SSE}(c)
 \le\sum_i\norm{w_i-\bar w^{(1)}}_2^2=m^{\mathrm c},\\
 \sum_{i\ne l}\langle v_i,v_l\rangle&=-\sum_i\norm{v_i}_2^2=-m^{\mathrm c},\\
 \mathbb E_\pi\left\|\sum_{i\in C_j^\pi}v_i\right\|_2^2
 &=\frac{n_j}{n}m^{\mathrm c}
   -\frac{n_j(n_j-1)}{n(n-1)}m^{\mathrm c}
  =\frac{n_j(n-n_j)}{n(n-1)}m^{\mathrm c},\\
 \mathbb E_\pi\mathrm{SSE}(c\circ\pi)
 &=m^{\mathrm c}-\sum_{j=1}^{k_c}\frac1{n_j}
                 \mathbb E_\pi\left\|\sum_{i\in C_j^\pi}v_i\right\|_2^2\\
 &=m^{\mathrm c}\left(1-\frac{\sum_j(n-n_j)}{n(n-1)}\right)
  =\frac{n-k_c}{n-1}m^{\mathrm c},\\
 \mathbb E_\pi S_a(c\circ\pi)&=\frac{n-k_c}{n-1}S^{(1)}.
 \qedhere
\end{align*}
\end{proof}

\section{Limitations}
\label{app:limitations}

This work studies parameter organization from the perspective of downstream vocabulary behavior, which provides only one view of the internal structure of language models. Learned representations are shaped by many interacting factors, including architecture, optimization, training data, and scale, and the relationships observed from a particular analytical perspective may not fully capture this broader complexity. More generally, connecting internal representational structure to high-level model behavior remains an open problem, especially across different model families and stages of training. Future work could explore how the proposed perspective interacts with complementary views of representation, training dynamics, and model scaling, toward a more unified understanding of how internal structure relates to emergent capabilities.

\section{More Discussions}
\label{app:more_discussions}

\noindent\textbf{Q1. What does a \dvl{} readout describe for one parameter column?}\par
The readout is the prompt-set average of the local logit response to a unit change in that column's activation coefficient (Equation~\ref{eq:dvl}). For an individual prompt, the realized first-order contribution is $c_{\ell,m,i}(x)=f_{\ell,m,i}(x)r_{\ell,m,i}(x)$, as defined in Section~\ref{sec:dvl_readout}. Thus the signed vocabulary tail of a column identifies a response direction; the coefficient determines how that direction participates in a particular computation.

\noindent\textbf{Q2. Does the readout predict actual changes to the model output?}\par
The finite-difference experiment directly compares the predicted local response with a small parameter perturbation. Across 720 cases from nine models, pooled coordinate-orientation agreement is $0.980$, while exact two-tail Top-32 overlap is $0.407$ and Recall@128 is $0.613$ (Table~\ref{tab:fd}). This distinction matters: the readout is more accurate about response direction and the broader affected vocabulary region than about the exact rank of every token. The ablation, steering, and swapping study then tests decoded behavior across 21 models (Section~\ref{sec:intervention}).

\noindent\textbf{Q3. Could the observed structure be a generic effect of random parameter directions?}\par
The matched-null analysis compares concept-mean directions with Gaussian directions under the same model, depth, and module cells. Both excess kurtosis and top-1\% energy exceed the matched null on average in all 21 models, and the positive pattern remains after each model family is removed in turn (Appendix~\ref{app:matched_null_breadth}). These controls support the presence of structured vocabulary responses, while the finite-difference and intervention tests separately examine whether the responses track functional changes.

\noindent\textbf{Q4. Do all modules provide equally precise vocabulary readouts?}\par
No. In Table~\ref{tab:fd}, orientation agreement ranges from $0.947$ for K to $0.997$ for FFN; exact Top-32 overlap ranges from $0.209$ for K to $0.565$ for FFN. The method uses the same local-response definition for Q, K, V, O, and FFN, but the empirical coordinate precision differs across these modules. The module-level values are therefore more informative than a single pooled score when interpreting a particular projection.


\end{document}